\def\eqref#1{equation~\ref{#1}}
\def\1{\bm{1}}
\DeclareMathAlphabet{\mathsfit}{\encodingdefault}{\sfdefault}{m}{sl}
\SetMathAlphabet{\mathsfit}{bold}{\encodingdefault}{\sfdefault}{bx}{n}
\DeclareMathOperator*{\argmin}{arg\,min}
\crefname{defn}{Definition}{Definition}
\crefname{section}{Section}{Section}
\crefname{algorithm}{Algorithm}{Algorithm} 
\crefname{thm}{Theorem}{Theorem}
\crefname{lem}{Lemma}{Lemma}
\crefname{prop}{Proposition}{Proposition}
\crefname{asm}{Asm.}{Asm.}
\crefname{appendix}{Appendix}{Appx.}
\crefname{equation}{Equation}{Equations}
\crefname{figure}{Figure}{Figure}
\crefname{table}{Table}{Table}
\crefname{cor}{Corollary}{Corollary}
\title{Discrete Copula Diffusion}
\author{Anji Liu$^{1,2}$,~Oliver Broadrick$^{1}$, ~Mathias Niepert$^{2}$,~{Guy Van den Broeck}$^{1}$
\\\\
$^{1}$Department of Computer Science, University of California, Los Angeles, USA
\\
$^{2}$Institute for Artificial Intelligence, University of Stuttgart, Germany \\
\texttt{\{liuanji,obroadrick,guyvdb\}@cs.ucla.edu}
\\
\texttt{mathias.niepert@ki.uni-stuttgart.de}
}
\begin{document}

\maketitle

\begin{abstract}
Discrete diffusion models have recently shown significant progress in modeling complex data, such as natural languages and DNA sequences. However, unlike diffusion models for continuous data, which can generate high-quality samples in just a few denoising steps, modern discrete diffusion models still require hundreds or even thousands of denoising steps to perform well. In this paper, we identify a fundamental limitation that prevents discrete diffusion models from achieving strong performance with fewer steps -- they fail to capture dependencies between output variables at each denoising step. To address this issue, we provide a formal explanation and introduce a general approach to supplement the missing dependency information by incorporating another deep generative model, termed the \emph{copula} model. Our method does not require fine-tuning either the diffusion model or the copula model, yet it enables high-quality sample generation with significantly fewer denoising steps. When we apply this approach to autoregressive copula models, the combined model outperforms both models individually in unconditional and conditional text generation. Specifically, the hybrid model achieves better (un)conditional text generation using 8 to 32 times fewer denoising steps than the diffusion model alone. In addition to presenting an effective discrete diffusion generation algorithm, this paper emphasizes the importance of modeling inter-variable dependencies in discrete diffusion.\footnote{Code is available at \href{https://github.com/liuanji/Copula-Diffusion}{\color{blue}{https://github.com/liuanji/Copula-Diffusion}}.}
\end{abstract}

\section{Introduction}


Discrete diffusion models have recently achieved significant progress in modeling complex data such as natural languages \citep{campbell2022continuous,sahoo2024simple}, protein sequences \citep{gruver2023protein,morehead2023towards}, and graphs \citep{vignac2022digress,huang2023conditional}. In particular, recent discrete diffusion models for text generation \citep{lou2024discrete,sahoo2024simple,shi2024simplified} have matched or even surpassed the performance of autoregressive models at the scale of GPT-2 \citep{radford2019language}. Additionally, discrete diffusion models offer improved inference-time controllability using guidance from auxiliary models such as classifiers \citep{dhariwal2021diffusion}, making them suitable for controlled generation tasks \citep{li2022diffusion,han2023ssd}.

Despite these promising results, discrete diffusion models still require hundreds to thousands of denoising steps to produce high-quality samples \citep{austin2021structured,sahoo2024simple}, significantly affecting their efficiency. In this paper, we identify a fundamental limitation in most discrete diffusion models that hinders their ability to generate high-quality samples in just a few steps.

We illustrate the problem in \cref{fig:pipeline}. At each denoising step, a partially completed sample shown in the top-left is fed into a sequence-to-sequence denoising model, which predicts the univariate marginal distributions for each masked token independently. A new output sequence is then sampled based on these univariate marginals before proceeding to the next denoising step. The key issue with this process is that when multiple ``edits'' (\ie replacing masked tokens with data tokens) are made simultaneously, the model does not account for the joint probability of these changes occurring together. As a result, the generated samples often lack coherence, as shown in the bottom-left of \cref{fig:pipeline}. This problem is exacerbated in few-step generation, where many tokens must be edited simultaneously. We formally demonstrate that if the diffusion model predicts each variable independently, an irreducible term (in addition to the data entropy) remains in the negative evidence lower bound (ELBO), preventing the model from perfectly capturing the data distribution.

We propose using a generative model, which we refer to as the copula model, to compensate for the missing dependency information between output variables at each denoising step. Our method operates only at inference time and can be adapted to any discrete diffusion model and a wide range of copula models. 
As illustrated on the right side of \cref{fig:pipeline}, the input sequence is also fed into a copula model that (implicitly) produces information on inter-variable dependencies. This information is combined with the univariate marginals predicted by the diffusion model to produce a more accurate distribution, resulting in higher-quality samples, shown in the bottom-right~corner.

We formally show that the univariate marginals from the diffusion model and the dependencies captured by the copula model can be combined in a principled way, leading to a better approximation of the true denoising distribution under mild assumptions. Further, finding this combined distribution reduces to solving a convex optimization problem that can be efficiently approximated in practice.

By instantiating the copula model as an autoregressive deep generative model such as GPT \citep{radford2019language}, we propose an algorithm that combines any pretrained discrete diffusion model with an autoregressive model to form a hybrid model called \textbf{D}iscrete \textbf{C}opula \textbf{D}iffusion (DCD). This model is capable of producing high-quality (un)conditional samples with only a few denoising steps. Empirical results on text and antibody generation show that DCD significantly outperforms both of its base models. Moreover, DCD achieves comparable or better performance using 8 to 32 times fewer denoising steps compared to the base discrete diffusion model. In addition to proposing a discrete diffusion model capable of few-step generation, we emphasize the importance of modeling inter-variable dependencies in discrete diffusion models.

\begin{figure}[t]
    \centering
    \includegraphics[width=\linewidth]{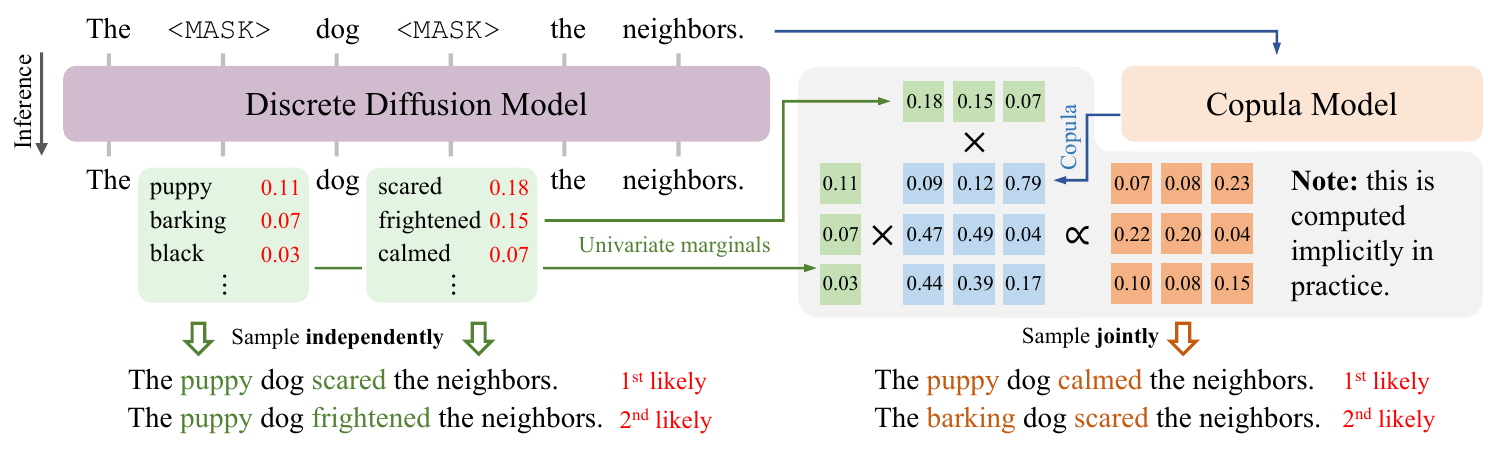}
    \vspace{-2.4em}
    \caption{\textbf{Discrete Copula Diffusion (DCD).} At each denoising step, a partially completed sequence is given as input (top-left). The diffusion model independently predicts the univariate marginals for each masked token, which leads to the samples in the bottom-left. DCD introduces an additional copula model (top-right) to capture the inter-variable dependencies, thereby supplementing the information missed by the diffusion model. By combining outputs from both models in a principled way, DCD achieves better performance than either model individually (see improved samples in the bottom-right), enabling few-step discrete diffusion generation.}
    \label{fig:pipeline}
    \vspace{-0.6em}
\end{figure}

\section{Preliminaries}
\label{sec:preliminaries}

We aim to model the joint distribution of variables $\X_0$, a set of categorical variables with $C$ categories. Discrete diffusion models \citep{austin2021structured} learn to sample from $\p (\X_0)$ by modeling the reversal of the following noising process involving $\X_0$ and a set of auxiliary variables $\{\X_{t}\}_{t=1}^{T}$:
    \begin{align}
        \forall t \in \{1, \dots, T\} \quad \q(\x_t \given \x_{t-1}) := \cat(\x_t ; Q_t \!\cdot\! \x_{t-1}),
    \end{align}
\noindent where $\cat (\x; \mathbf{p})$ refers to the Categorical distribution over $\x$ with class probabilities $\mathbf{p}$, and $Q_t$ is a $C \!\times\! C$ transition matrix that is applied independently to every variable $x_{t-1}^{i}$ (denote $x^{i}_{t-1}$ as the $i$th variable of $\x_{t-1}$) to get the corresponding categorical distribution of $x_{t}^{i}$. Specifically, each variable $x_{t-1}^{i}$ is treated as a one-hot vector of size $C \!\times\! 1$, which is then multiplied by $Q_t$ to compute the class probabilities of $x_{t}^{i}$. The noising process is designed such that $\p(\x_{T})$ follows a simple distribution regardless of the data distribution. 

Instead of using a fixed number of predefined time steps, we can treat $t$ as a continuous variable within the range $[0, T]$. The noising process is now defined by the rate of change of $\p(\x_{t})$ \wrt $t$: $\frac{d \p(\x_{t})}{d t} \!=\! Q \!\cdot\! \p(\x_{t})$, where $Q \!\in\! \mathbb{R}^{C \times C}$ is a transition rate matrix. For any $0 \!\leq\! s \!<\! t \!\leq\! T$, we have 
    \begin{align*}
        \q (\x_{t} \given \x_{s}) := \cat (\x_{t}; \exp((t\!-\!s) \!\cdot\! Q) \!\cdot\! \x_{s}), 
    \end{align*}
\noindent where $\exp(\cdot)$ denotes the matrix exponential.

Discrete diffusion models represent the reverse diffusion process as a Markov chain from $\x_T$ to $\x_0$, effectively reversing the noising process. Specifically, the reverse diffusion is modeled as: 
    \begin{align*}
        \p_{\theta} (\x_{0:T}) := \p(\x_T) \prod_{t=0}^{T-1} \p_{\theta} (\x_{t} \given \x_{t+1}).
    \end{align*}
In the discrete-time framework, the model is trained by maximizing the ELBO, which is defined by the forward joint distribution ($\q(\x_{1:T} \given \x_{0}) \p(\x_0)$) and the reverse joint distribution ($\p_{\theta} (\x_{0:T})$) \citep{ho2020denoising}. In the continuous-time framework, we can either adopt an extended ELBO objective \citep{zhao2024improving} or to learn the likelihood ratios $\{\p(\x'_{t}) / \p(\x_{t})\}_{\x_t, \x'_t}$, allowing for the recovery of $\p(\x_{s} \given \x_{t})$ ($s \!<\! t$) in an indirect manner \citep{lou2024discrete,meng2022concrete,sun2022score}. Following the reverse diffusion process, sampling from a diffusion model involves first sampling from the prior $\p(\x_T)$ and then recursively sampling $\x_{T-1}, \dots, \x_{0}$ following $\{\p_{\theta} (\x_{t} \given \x_{t-1})\}_{t=0}^{T-1}$.


\section{Challenge of Modeling Variable Dependencies}
\label{sec:challenges}

Unlike continuous diffusion models, which can produce high-quality samples with just a few steps (\eg \citet{song2023consistency,zhou2024score}), discrete diffusion models exhibit a strong positive correlation between sample quality and the number of denoising steps. For instance, to generate $1024$ text tokens, a recent discrete diffusion model SEDD \citep{lou2024discrete} requires $1024$ steps to reach around $35$ perplexity (PPL), while with $32$ denoising steps the PPL is only around $130$.

We argue that the need for a large number of sampling steps in discrete diffusion models stems from their inability to capture inter-variable dependence among the outputs. Specifically, at each time step $t$, discrete diffusion models independently sample each variable from $\x_{t}$ conditioned on $\x_{t+1}$, \ie $\p(\x_{t} \given \x_{t+1}) \!:=\! \prod_{i} \p(x_{t}^{i} \given \x_{t+1})$. As a result, when changing multiple variables from $\x_{t+1}$ to $\x_{t}$, the model fails to account for the joint probability of these modifications happening together.
In the following, we first quantitatively analyze the performance degradation caused by this independent denoising assumption. We then discuss approaches to mitigate this issue.


\boldparagraph{Quantifying the Performance Drop.} 
The total correlation of a distribution $\p(\X)$ is the KL-divergence between itself and the product of its univariate marginals:
    {\setlength{\belowdisplayskip}{-0.2em}
    \begin{align*}
        \tc (\p(\X)) \!:=\! \sum_{\x} \p(\x) \log \Big ( \p(\x) / \prod_{i} \p(x_i) \Big ).
    \end{align*}}

The following result demonstrates that, under the independent denoising assumption, there is an irreducible component in the ELBO that directly stems from ignoring inter-variable dependencies.

\begin{prop}
\label{prop:elbo-decomp}
    Assume the denoising distributions $\{\p_{\theta} (\x_{t} \given \x_{t+1})\}_{t=0}^{T-1}$ are fully factorized. Let $\mathrm{H} (\p(\X))$ denote the entropy of $\p(\X)$. For any choice of denoising distributions (or equivalently, any parameterization $\theta$), the negative ELBO of the diffusion model is lower bounded by
        {\setlength{\belowdisplayskip}{0.0em}
        \begin{align}
            \mathrm{H} (\p(\X_0)) + \sum_{t=1}^{T} \tc (q (\X_{t-1} \given \X_{t})), \text{~~~~where~} \tc (\p(\Y \given \X)) := \expectation_{\x \sim \p} \big [ \tc (\p(\Y \given \x)) \big ].
            \label{eq:elbo-bound}
        \end{align}}
    \vspace{-0.4em}
\end{prop}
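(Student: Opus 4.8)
The plan is to express the negative ELBO exactly and then bound it term by term, isolating the entropy first and the total-correlation terms last. Since the forward joint satisfies $\q(\x_{0:T}) = \p(\x_0)\,\q(\x_{1:T}\given\x_0)$, a one-line manipulation of the ratio inside the ELBO gives
\begin{align*}
    -\mathrm{ELBO} = \mathrm{H}(\p(\X_0)) + \KL\big(\q(\x_{0:T}) \,\big\|\, \p_\theta(\x_{0:T})\big),
\end{align*}
where the entropy appears because the $\x_0$-marginal of $\q$ is exactly $\p(\x_0)$, so $\expectation_{\q}[-\log \p(\x_0)] = \mathrm{H}(\p(\X_0))$. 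This already peels off the $\mathrm{H}(\p(\X_0))$ summand and reduces everything to lower bounding the chain-level KL divergence.

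The key structural step is to decompose that chain KL using the \emph{marginal} reverse transitions. The forward process is a Markov chain, hence Markov in reverse as well, so its joint factorizes as $\q(\x_{0:T}) = \q(\x_T)\prod_{t=1}^{T}\q(\x_{t-1}\given\x_t)$ with transitions $\q(\x_{t-1}\given\x_t)$ that are \emph{not} conditioned on $\x_0$. Matching this against $\p_\theta(\x_{0:T}) = \p(\x_T)\prod_{t=1}^{T}\p_\theta(\x_{t-1}\given\x_t)$ (after reindexing the product in the statement) and applying the chain rule for KL divergence yields
\begin{align*}
    \KL\big(\q(\x_{0:T})\,\big\|\,\p_\theta(\x_{0:T})\big) = \KL\big(\q(\x_T)\,\big\|\,\p(\x_T)\big) + \sum_{t=1}^{T} \expectation_{\x_t \sim \q}\big[\KL\big(\q(\X_{t-1}\given\x_t)\,\big\|\,\p_\theta(\X_{t-1}\given\x_t)\big)\big].
\end{align*}
The prior term $\KL(\q(\x_T)\,\|\,\p(\x_T))$ is nonnegative and is simply dropped for the lower bound.

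The remaining ingredient is a per-step lemma: for any distribution $P$ over the output variables with univariate marginals $\{P_i\}$ and any fully factorized $R = \prod_i R_i$,
\begin{align*}
    \KL(P \,\|\, R) = \tc(P) + \sum_i \KL(P_i \,\|\, R_i) \ge \tc(P),
\end{align*}
obtained by inserting $\prod_i P_i$ inside the logarithm and splitting the sum. Applying this with $P = \q(\X_{t-1}\given\x_t)$ and $R = \p_\theta(\X_{t-1}\given\x_t)$—the latter being factorized exactly by the hypothesis of the proposition—bounds each inner KL below by $\tc(\q(\X_{t-1}\given\x_t))$. Taking the expectation over $\x_t \sim \q$ and invoking the definition $\tc(\q(\X_{t-1}\given\X_t)) = \expectation_{\x_t\sim\q}[\tc(\q(\X_{t-1}\given\x_t))]$, then summing over $t$ and reattaching the entropy term, gives the claimed bound in \cref{eq:elbo-bound}.

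I expect the main obstacle to be conceptual rather than computational: one must decompose the chain KL using the marginal reverse transitions $\q(\x_{t-1}\given\x_t)$ rather than the $\x_0$-conditioned posteriors $\q(\x_{t-1}\given\x_t,\x_0)$ that appear in the standard discrete-diffusion ELBO derivation, because only the former produce total-correlation terms of the stated form. Once this factorization is chosen, the entropy extraction and the per-step total-correlation lemma are both routine, and the nonnegativity of the discarded prior KL and marginal KL terms is what makes the identity collapse into the desired inequality.
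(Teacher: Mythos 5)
Your proof is correct and takes essentially the same route as the paper's: both reduce the negative ELBO to the data entropy plus the prior KL plus per-step KLs against the marginal reverse transitions $\q(\x_{t-1} \given \x_{t})$ (the paper derives this via a telescoping Bayes'-rule manipulation, you via the KL chain rule on the joints), and then bound each per-step KL below by the total correlation. The only difference is one of rigor rather than substance: your identity $\KL(P \,\|\, R) = \tc(P) + \sum_i \KL(P_i \,\|\, R_i)$ for factorized $R$ makes explicit, uniformly over all $\theta$, what the paper obtains by plugging in the optimal factorized denoiser $\prod_i \q(x_{t-1}^{i} \given \x_{t})$ --- and your per-step KL also has its arguments in the correct order, whereas the paper's displayed decomposition transposes them.
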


The first term represents the entropy of the data distribution and is irreducible. The second term additionally depends on the noising process and the chosen noise levels, which set an upper limit on the performance of discrete diffusion models that use the independent denoising assumption. Note that although $\tc (\q (\X_{t} \given \X_{t-1}))$ is zero according to the definition of the noising process, $\tc (\q (\X_{t-1} \given \X_{t}))$ is not unless the data distribution is fully factorized.

\boldparagraph{Closing the Performance Gap.} 
While increasing the number of denoising steps can improve sample quality, it also introduces significant computational overhead during inference. Our goal is to use fewer denoising steps while maintaining good sample quality. As shown in \cref{prop:elbo-decomp}, given a fixed noising strategy and the number of denoising steps, the only way to reduce the negative ELBO lower bound in \cref{eq:elbo-bound} is to relax the independent denoising assumption. That is, in addition to modeling the univariate marginals, we must also account for dependencies between variables.

The challenge of capturing inter-variable dependencies during each denoising step can be addressed through adjustments during either training or inference. A direct approach involves modeling both the univariate marginals and the inter-variable dependencies within the diffusion model. However, this requires improving existing sequence-to-sequence architectures (\eg \citet{devlin2018bert}) to capture dependencies between output variables directly, which is not very well studied in the literature.



Instead, we propose an inference-time solution that complements the information missed by the pretrained discrete diffusion model. Specifically, we aim to combine the univariate marginals produced by the diffusion model with the inter-variable dependencies learned by another (possibly smaller) deep generative model, which we refer to as the \emph{copula model}. The term ``copula'' traditionally refers to the dependencies between random variables in statistics \citep{nelsen2006introduction}.




\section{Modeling Variable Dependencies with Copula Models}

As motivated in the previous section, our main goal is to combine the univariate marginals produced by the diffusion model with the inter-variable dependencies captured by a copula model. In this section, we first formalize the concept of ``combining'' two such distributions in a general context (Sec.~\ref{sec:copula-general}). We then specialize the formulation to the case of diffusion models (Sec.~\ref{sec:copula-dm}).

\subsection{Combining Univariate Marginals with Inter-Variable Dependencies}
\label{sec:copula-general}

In this section, we discuss how to best inject inter-variable dependence using copula models given a target distribution $\p_{\mathrm{tar}}$ over $\X$. Assume we have access to $\p_{\mathrm{tar}}$ through two sources: (i) the set of all univariate marginal distributions $\{\p_{\mathrm{tar}}(X_{i})\}_{i}$, and (ii) an estimate $\p_{\mathrm{est}}$ of the target distribution coming from the copula model, which is also a generative model. Our goal is to combine these two estimates to construct $\hat{\p}$ that is ``closer'' to the true distribution $\p_{\mathrm{tar}}$ than either estimate individually.

We construct $\hat{\p}$ as the distribution that (i) matches the set of univariate marginals $\{\p_{\mathrm{tar}}(X_{i})\}_{i}$, and (ii) minimizes the KL divergence to $\p_{\mathrm{est}}$. The intuition is that by ensuring $\hat{\p}$ has the correct univariate marginals, we can achieve a good approximation of $\p_{\mathrm{tar}}$ even if $\p_{\mathrm{est}}$ is biased. To formalize this, we first define information projection (I-projection).

\begin{defn}
\label{defn:I-projection}
    The I-projection of a distribution $q(\X)$ onto a set of distributions $\calP$ over $\X$ is
        {\setlength{\belowdisplayskip}{0.0em}
        \begin{align*}
            \p^{*} = \argmin_{\p \in \calP} \kld (\p \;\|\; \q).
        \end{align*}}
    \vspace{-0.4em}
\end{defn}

Let $\calP^{\p}_{\mathrm{mar}}$ denote the set of distributions over $\X$ that share the same univariate marginals as $\p$. We define $\hat{\p}$ as the I-projection of $\p_{\mathrm{est}}$ onto $\calP^{\p_{\mathrm{tar}}}_{\mathrm{mar}}$. The following proposition shows that regardless of the initial estimate $\p_{\mathrm{est}}$ of $\p_{\mathrm{tar}}$, the I-projection $\hat{\p}$ will be an improved estimate of $\p_{\mathrm{tar}}$ in KL-divergence.


\begin{prop}
\label{prop:iproj-is-good}
If there exists $i$ and $x_{i}$ s.t. $\p_{\mathrm{tar}} (x_{i}) \!\neq\! \p_{\mathrm{est}} (x_{i})$, then $\kld (\p_{\mathrm{tar}} \|\; \hat{\p}) \!<\! \kld (\p_{\mathrm{tar}} \|\; \p_{\mathrm{est}}).$
\end{prop}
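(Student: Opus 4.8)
The plan is to recognize $\calP^{\p_{\mathrm{tar}}}_{\mathrm{mar}}$ as a \emph{linear} (affine) family of distributions and to exploit the Pythagorean identity that the I-projection onto such a family satisfies. The constraints defining $\calP^{\p_{\mathrm{tar}}}_{\mathrm{mar}}$ are the marginal-matching equations $\sum_{\x : x_i = a} \p(\x) = \p_{\mathrm{tar}}(x_i \!=\! a)$ together with normalization, all of which are \emph{linear} in $\p$. The crucial observation is that $\p_{\mathrm{tar}}$ itself lies in $\calP^{\p_{\mathrm{tar}}}_{\mathrm{mar}}$, so the target and its projection $\hat{\p}$ sit inside the same constraint set; this is what lets me split the divergence from $\p_{\mathrm{tar}}$ to $\p_{\mathrm{est}}$ into a part absorbed by $\hat{\p}$ and a strictly positive remainder.

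First I would write out the stationarity (KKT) conditions for $\min_{\p \in \calP^{\p_{\mathrm{tar}}}_{\mathrm{mar}}} \kld(\p \,\|\, \p_{\mathrm{est}})$. Differentiating the Lagrangian with the marginal and normalization multipliers $\{\lambda_{i,x_i}\}$ and $c$ shows that at the optimum $\hat{\p}$ the log-ratio decomposes into single-coordinate terms,
\[
\log \frac{\hat{\p}(\x)}{\p_{\mathrm{est}}(\x)} = \sum_i \lambda_{i, x_i} + c .
\]
The heart of the argument is then the orthogonality relation $\sum_{\x} \big(\p(\x) - \hat{\p}(\x)\big)\log\frac{\hat{\p}(\x)}{\p_{\mathrm{est}}(\x)} = 0$ for every $\p \in \calP^{\p_{\mathrm{tar}}}_{\mathrm{mar}}$. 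Substituting the decomposed log-ratio, the constant $c$ contributes nothing since both $\p$ and $\hat{\p}$ are normalized, and each $\lambda_{i,x_i}$ term collapses because $\p$ and $\hat{\p}$ share identical univariate marginals (both equal to those of $\p_{\mathrm{tar}}$).

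Inserting $\hat{\p}$ into $\kld(\p \,\|\, \p_{\mathrm{est}})$ and invoking this orthogonality yields the Pythagorean identity $\kld(\p \,\|\, \p_{\mathrm{est}}) = \kld(\p \,\|\, \hat{\p}) + \kld(\hat{\p} \,\|\, \p_{\mathrm{est}})$, valid for all $\p$ in the family. I then set $\p = \p_{\mathrm{tar}}$ and rearrange to obtain
\[
\kld(\p_{\mathrm{tar}} \,\|\, \hat{\p}) = \kld(\p_{\mathrm{tar}} \,\|\, \p_{\mathrm{est}}) - \kld(\hat{\p} \,\|\, \p_{\mathrm{est}}) .
\]
The hypothesis that $\p_{\mathrm{tar}}(x_i) \neq \p_{\mathrm{est}}(x_i)$ for some $i, x_i$ means $\p_{\mathrm{est}} \notin \calP^{\p_{\mathrm{tar}}}_{\mathrm{mar}}$, whereas $\hat{\p} \in \calP^{\p_{\mathrm{tar}}}_{\mathrm{mar}}$ by construction; hence $\hat{\p} \neq \p_{\mathrm{est}}$ and so $\kld(\hat{\p} \,\|\, \p_{\mathrm{est}}) > 0$ by strict positivity of the KL divergence, giving the claimed strict inequality.

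The main obstacle is justifying the KKT / exponential-tilting form of the projection rigorously, which is where all the subtlety lives: I must argue that the minimizer exists and handle support and positivity so that every logarithm and every KL term is finite (most cleanly by assuming $\p_{\mathrm{est}} > 0$, which makes the objective strictly convex and the feasible set compact). Once the single-coordinate decomposition of $\log(\hat{\p}/\p_{\mathrm{est}})$ is established, the orthogonality cancellation and the final rearrangement are routine.
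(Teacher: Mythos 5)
Your proposal is correct and takes essentially the same route as the paper's proof: both rest on the Pythagorean relation for the I-projection onto the convex (indeed affine) set $\calP_{\mathrm{mar}}^{\p_{\mathrm{tar}}}$, instantiate it at $\p' = \p_{\mathrm{tar}}$ (which lies in that set), and obtain strictness from $\kld (\hat{\p} \;\|\; \p_{\mathrm{est}}) > 0$, since the hypothesis places $\p_{\mathrm{est}}$ outside $\calP_{\mathrm{mar}}^{\p_{\mathrm{tar}}}$ while $\hat{\p}$ lies inside it. The only difference is presentational: the paper cites the Pythagorean inequality for convex families as a known theorem, whereas you re-derive it (as an exact identity, using the affinity of the marginal constraints) via KKT stationarity and the orthogonality cancellation, which is a more self-contained version of the same key lemma.
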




Having now seen that $\hat{\p}$ is an improved estimate of $\p_{\mathrm{tar}}$, we next explore whether it is feasible to compute $\hat{\p}$ given $\{\p_{\mathrm{tar}} (X_{i})\}_{i}$ and $\p_{\mathrm{est}}$. We start by showing that $\hat{\p}$ has a simple form.

\begin{prop}
\label{prop:phat-form}
    Assume $\forall \x$, $\p_{\mathrm{tar}} (\x) \!>\! 0$ and $\p_{\mathrm{est}} (\x) \!>\! 0$. Then $\hat{\p}$ exists and has the form
        {
        \setlength{\abovedisplayskip}{0.4em}
        \setlength{\belowdisplayskip}{-0.4em}
        \begin{align*}
            \hat{\p} (\x) = \p_{\mathrm{est}} (\x) \cdot \prod_{i} \sigma_{i} (x_{i}), 
        \end{align*}}
        
    \noindent where $\sigma_{i}$ is a positive function 
    that depends on $x_i$.
    \vspace{-0.2em}
\end{prop}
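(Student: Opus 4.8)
The plan is to recognize the I-projection $\hat{\p} = \argmin_{\p \in \calP^{\p_{\mathrm{tar}}}_{\mathrm{mar}}} \kld(\p \;\|\; \p_{\mathrm{est}})$ as a finite-dimensional convex program and to read off the claimed form from its first-order optimality conditions. First I would write the feasible set explicitly: a point $\p = (\p(\x))_{\x}$ lies in $\calP^{\p_{\mathrm{tar}}}_{\mathrm{mar}}$ iff it satisfies normalization $\sum_{\x} \p(\x) = 1$, nonnegativity $\p(\x) \ge 0$, and, for every variable index $i$ and category $c$, the marginal constraint $\sum_{\x : x_i = c} \p(\x) = \p_{\mathrm{tar}}(X_i = c)$. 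All of these are affine in $\p$, so $\calP^{\p_{\mathrm{tar}}}_{\mathrm{mar}}$ is a compact convex polytope, and it is nonempty since $\p_{\mathrm{tar}}$ itself belongs to it. Because $\p_{\mathrm{est}}(\x) > 0$ for all $\x$, the map $\p \mapsto \kld(\p \;\|\; \p_{\mathrm{est}})$ is finite, continuous, and strictly convex on this set (strict convexity of $x \log x$), so a unique minimizer $\hat{\p}$ exists --- this gives the existence claim.

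Next I would show $\hat{\p}$ has full support, which is what allows the nonnegativity constraints to be ignored so that equality-constrained Lagrange theory applies. The objective is differentiable on the open orthant with $\partial_{\p(\x)} \kld(\p \;\|\; \p_{\mathrm{est}}) = \log(\p(\x)/\p_{\mathrm{est}}(\x)) + 1$, which diverges to $-\infty$ as $\p(\x) \to 0^{+}$. Suppose some coordinate of $\hat{\p}$ were zero. Since $\p_{\mathrm{tar}}$ has full support by assumption and lies in the (convex) feasible set, the segment from $\hat{\p}$ toward $\p_{\mathrm{tar}}$ is a feasible direction that strictly increases exactly the vanishing coordinates; the directional derivative of $\kld(\cdot \;\|\; \p_{\mathrm{est}})$ along it is $-\infty$, contradicting optimality. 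Hence $\hat{\p}(\x) > 0$ for all $\x$ and the minimizer lies in the relative interior.

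With an interior optimum, I would form the Lagrangian $\mathcal{L}$ using multipliers $\lambda_{i,c}$ for the marginal constraints and $\mu$ for normalization, and impose stationarity $\partial_{\p(\x)} \mathcal{L} = 0$, giving $\log(\hat{\p}(\x)/\p_{\mathrm{est}}(\x)) + 1 + \mu + \sum_i \lambda_{i, x_i} = 0$. Exponentiating yields $\hat{\p}(\x) = \p_{\mathrm{est}}(\x)\, \exp(-1 - \mu) \prod_i \exp(-\lambda_{i, x_i})$. Setting $\sigma_i(x_i) := \exp(-\lambda_{i, x_i})$ and folding the constant $\exp(-1-\mu)$ into one of the factors produces exactly $\hat{\p}(\x) = \p_{\mathrm{est}}(\x) \prod_i \sigma_i(x_i)$, with each $\sigma_i$ strictly positive and depending on $x_i$ only, as claimed. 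Since the program is convex and the minimizer is interior, these first-order conditions are also sufficient, so the stationary point is the global I-projection.

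The step I expect to be the main obstacle is the full-support argument: the KL objective is smooth only on the open simplex, so boundary minimizers must be ruled out before Lagrange multipliers legitimately apply, and the cleanest route is the gradient-blow-up (barrier) argument above rather than a generic KKT treatment, which is awkward because of the $\log$ singularity at zero. A minor subtlety worth noting is that the marginal constraints are linearly dependent (for each $i$, the per-category probabilities sum to one), so the multipliers $\lambda_{i,c}$ are not unique; this does not affect the stated functional form, only the particular representation of the $\sigma_i$.
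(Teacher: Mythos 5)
Your proof is correct and takes essentially the same route as the paper's: formulate the I-projection as a convex program and read the product form $\hat{\p}(\x) = \p_{\mathrm{est}}(\x)\prod_i \sigma_i(x_i)$ off the Lagrangian stationarity conditions, with existence following from convexity and feasibility of the marginal-constraint set. Your one addition --- the gradient-blow-up argument establishing that the minimizer has full support before Lagrange multipliers are invoked --- is a point the paper's proof passes over silently, so your version is in fact slightly more complete than the published one.
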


Assume $\X$ consists of $N$ categorical variables, each with $C$ categories, we can represent the factors $\{\sigma_{i}\}_{i}$ using a matrix $\V \!\in\! \mathbb{R}^{N \times C}$. Under this representation, the combined distribution is 
    {
    \setlength{\abovedisplayskip}{0.4em}
    \setlength{\belowdisplayskip}{-0.4em}
    \begin{align}
        \hat{\p} (\x) = \p_{\mathrm{est}} (\x) \cdot \prod_{i} \exp (\V [i, x_i]), \label{eq:phat-form}
    \end{align}}

\noindent where $\V[i, j]$ denotes the element at the $i$th row and $j$th column of $\V$ and $\V [i, x_i] \!=\! \log \sigma_{i} (x_{i})$. Determining the true matrix $\V^{*}$ corresponding to $\hat{\p}$, which is the I-projection of $\p_{\mathrm{est}}$ onto $\calP_{\mathrm{mar}}^{\p_{\mathrm{tar}}}$, can be reformulated as solving the following convex optimization problem.

\begin{thm}
\label{thm:copula-obj}
    If $\V^{*}$ minimizes the following convex objective function, then the corresponding $\hat{\p}$ defined by \cref{eq:phat-form} is the I-projection of $\p_{\mathrm{est}}$ onto $\calP_{\mathrm{mar}}^{\p_{\mathrm{tar}}}$.\footnote{\cref{eq:copula-obj} closely resembles the matrix scaling problem \citep{idel2016review}. See \cref{appx:relation} for details.}
        {\setlength{\abovedisplayskip}{0.4em}
        \setlength{\belowdisplayskip}{-0.4em}
        \begin{align}
            \calL (\V; \p_{\mathrm{tar}}, \p_{\mathrm{est}}) := \sum_{\x} \p_{\mathrm{est}} (\x) \cdot \prod_{i} \exp (\V [i, x_i]) - \sum_{i=1}^{N} \sum_{x_i=1}^{C} \V [i, x_i] \cdot \p_{\mathrm{tar}} (x_i).
            \label{eq:copula-obj}
        \end{align}}
    \vspace{-0.4em}
\end{thm}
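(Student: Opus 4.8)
The plan is to recognize \cref{eq:copula-obj} as an unconstrained convex program whose first-order optimality condition is precisely the statement that $\hat{\p}$ matches the univariate marginals of $\p_{\mathrm{tar}}$, and then to verify that any $\hat{\p}$ of the form \cref{eq:phat-form} with the correct marginals is indeed the I-projection of $\p_{\mathrm{est}}$ onto $\calP_{\mathrm{mar}}^{\p_{\mathrm{tar}}}$. First I would record convexity: writing $Z(\V) := \sum_{\x}\p_{\mathrm{est}}(\x)\exp(\sum_i \V[i,x_i])$, each summand is a nonnegative constant times the exponential of a function linear in $\V$, hence convex, and a nonnegative combination of convex functions minus the linear term $\sum_{i}\sum_{x_i}\V[i,x_i]\,\p_{\mathrm{tar}}(x_i)$ stays convex.

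Since the sum over $\x$ ranges over the finite set of $C^N$ configurations, I can differentiate termwise. With $\hat{\p}(\x) = \p_{\mathrm{est}}(\x)\prod_i \exp(\V[i,x_i])$ as in \cref{eq:phat-form}, differentiating $Z$ with respect to $\V[i,x_i]$ retains only the configurations whose $i$-th coordinate equals $x_i$, giving
\[
\frac{\partial \calL}{\partial \V[i,x_i]} = \sum_{\x' : x'_i = x_i}\hat{\p}(\x') - \p_{\mathrm{tar}}(x_i) = \hat{\p}(x_i) - \p_{\mathrm{tar}}(x_i),
\]
where $\hat{\p}(x_i)$ denotes the $i$-th univariate marginal of $\hat{\p}$. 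A minimizer $\V^{*}$ of the unconstrained convex objective is a stationary point, so this gradient vanishes, i.e. $\hat{\p}(x_i) = \p_{\mathrm{tar}}(x_i)$ for every $i$ and every category $x_i$. Summing over $x_i$ at a fixed $i$ yields $\sum_{\x}\hat{\p}(\x) = 1$, so $\hat{\p}$ is automatically normalized, and since $\p_{\mathrm{est}}>0$ we have $\hat{\p}>0$; hence $\hat{\p}\in\calP_{\mathrm{mar}}^{\p_{\mathrm{tar}}}$.

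It then remains to upgrade ``$\hat{\p}$ has the form \cref{eq:phat-form} and matches the marginals'' to ``$\hat{\p}$ is the I-projection.'' I would do this with a Pythagorean identity rather than an argument about uniqueness of $\V^{*}$: for any $\p\in\calP_{\mathrm{mar}}^{\p_{\mathrm{tar}}}$,
\[
\kld(\p\;\|\;\p_{\mathrm{est}}) = \kld(\p\;\|\;\hat{\p}) + \kld(\hat{\p}\;\|\;\p_{\mathrm{est}}).
\]
The identity holds because $\log\bigl(\hat{\p}(\x)/\p_{\mathrm{est}}(\x)\bigr) = \sum_i \V^{*}[i,x_i]$ depends on $\x$ only through its one-hot marginal statistics, so $\E_{\p}\bigl[\log(\hat{\p}/\p_{\mathrm{est}})\bigr] = \sum_i\sum_{x_i}\V^{*}[i,x_i]\,\p_{\mathrm{tar}}(x_i)$ takes the same value for every $\p$ sharing the marginals of $\p_{\mathrm{tar}}$, in particular for $\p=\hat{\p}$ itself. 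As $\kld(\p\;\|\;\hat{\p})\ge 0$ with equality iff $\p=\hat{\p}$, the identity gives $\kld(\p\;\|\;\p_{\mathrm{est}}) \ge \kld(\hat{\p}\;\|\;\p_{\mathrm{est}})$ for all feasible $\p$, so $\hat{\p}$ is the unique minimizer of $\kld(\cdot\;\|\;\p_{\mathrm{est}})$ over $\calP_{\mathrm{mar}}^{\p_{\mathrm{tar}}}$, i.e. the I-projection.

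I expect the main obstacle to be this last step, where the structural form of \cref{eq:phat-form} must be connected to optimality over the constraint set: the crux is the observation that $\log(\hat{\p}/\p_{\mathrm{est}})$ is an affine function of the marginal sufficient statistics, which is exactly what makes the cross term in the Pythagorean expansion constant over $\calP_{\mathrm{mar}}^{\p_{\mathrm{tar}}}$. One could alternatively invoke \cref{prop:phat-form} together with strict convexity of the KL divergence, but the Pythagorean route avoids separately arguing that the marginal-matching member of the family is unique. The gradient computation and the convexity check are routine by comparison.
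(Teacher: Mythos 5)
Your proof is correct, and its second half takes a genuinely different route from the paper's. The first half coincides: like the paper, you compute $\partial \calL / \partial \V[i,x_i]$, set it to zero, and conclude that any minimizer's induced distribution matches the univariate marginals of $\p_{\mathrm{tar}}$ (your added observation that summing the stationarity conditions over $x_i$ forces $\sum_{\x}\hat{\p}(\x)=1$ is a nice touch the paper leaves implicit). Where you diverge is in upgrading marginal-matching to I-projection. The paper argues via \cref{prop:phat-form}: it invokes the Lagrangian existence result to assert that the I-projection exists and has the form of \cref{eq:phat-form}, then uses convexity of $\calL$ to argue the converse inclusion, concluding that the I-projection \emph{is a} minimizer of $\calL$ -- which, strictly speaking, proves the reverse direction of the theorem and leaves implicit the step identifying an \emph{arbitrary} minimizer's distribution with the I-projection (this needs all minimizers of $\calL$ to induce the same distribution). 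Your Pythagorean-identity argument closes exactly this gap: since $\log\bigl(\hat{\p}(\x)/\p_{\mathrm{est}}(\x)\bigr) = \sum_i \V^{*}[i,x_i]$ is linear in the marginal statistics, the cross term $\E_{\p}[\log(\hat{\p}/\p_{\mathrm{est}})]$ is constant over $\calP_{\mathrm{mar}}^{\p_{\mathrm{tar}}}$, giving the \emph{exact} identity $\kld(\p\,\|\,\p_{\mathrm{est}}) = \kld(\p\,\|\,\hat{\p}) + \kld(\hat{\p}\,\|\,\p_{\mathrm{est}})$ (this is Csisz\'ar's Pythagorean theorem with equality for linear families, strengthening the inequality version the paper uses in proving \cref{prop:iproj-is-good}). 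This makes your proof self-contained -- no appeal to \cref{prop:phat-form} -- proves the theorem directly in its stated direction, and delivers uniqueness of the I-projection for free. What the paper's route buys in exchange is modularity: it reuses the already-established Lagrangian machinery, so the theorem's proof is short given the proposition. The only cosmetic caveat is that your argument tacitly uses the positivity assumptions ($\p_{\mathrm{est}}>0$, which you note, guaranteeing $\hat{\p}>0$ so that $\kld(\p\,\|\,\hat{\p})$ is finite); these are stated in \cref{prop:phat-form} rather than in the theorem itself, so you should import them explicitly.
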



We proceed to explain why I-projecting $\p_{\mathrm{est}}$ leads to a better estimate of $\p_{\mathrm{tar}}$ as suggested by \cref{prop:iproj-is-good}. In general, a joint distribution can be viewed as combining two independent pieces of information: (i) a set of univariate marginal distributions and (ii) a \textit{copula} describing the association or dependence among the variables. By the classical work of \citet{sklar1959fonctions}, for continuous variables the copula can take the form of a joint distribution with uniform margins and can be combined quite simply with univariate marginal distributions to recover the full joint distribution, a fact heavily exploited in statistics \citep{nelsen2006introduction}. 
While the discrete case is somewhat less straightforward, recent work of \citet{geenens2020copula} has developed the fundamental notions of discrete copula modeling as well, where the information of a copula can be parameterized by odds ratios.

\cref{fig:copula-example} shows an example consisting of two binary variables $X$ and $Y$. The probability table on the left can be equivalently expressed using univariate marginals (\ie $\p_{0 \cdot}, \p_{1 \cdot}$, $\p_{\cdot 0}$, $\p_{\cdot 1}$) and the odds ratio (\ie copula) $\omega \!:=\! \frac{\p_{00} \p_{11}}{\p_{01} \p_{10}}$ as shown in the middle of \cref{fig:copula-example}. Intuitively, $\omega \!=\! 125$ indicates that the phrases ``alpine skiing'' and ``scuba diving'' are more likely than others (\eg ``alpine diving''), and the marginals decide which of the two phrases appears more frequently. The idea of representing the copula with odds ratios generalizes to the multivariate case and is presented in \cref{appx:discrete-copula}.

The following result demonstrates that, under its functional form in \cref{eq:phat-form}, I-projecting $\p_{\mathrm{est}}$ onto $\calP_{\mathrm{mar}}^{\p_{\mathrm{tar}}}$ only improves the univariate marginals and leaves the copula unchanged regardless of $\V$.

\begin{prop}\label{prop:copula_is_invariant}
For a positive distribution $p$ and any $\V \!\in\! \mathbb{R}^{N\times C}$, the distribution $q(\bm{x}) \!\propto\! \p (\x) \!\cdot\! \prod_{i} \exp (\V [i, x_i])$ has the same copula as $p$.
\vspace{-0.2em}
\end{prop}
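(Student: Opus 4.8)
The plan is to unfold the definition of the discrete copula in terms of odds ratios (as set up in the binary example and generalized in \cref{appx:discrete-copula}) and to show that each such odds ratio is left unchanged when $p$ is reweighted by $\prod_i \exp(\V[i,x_i])$. Recall that the copula of a positive distribution $p$ is characterized by a family of generalized odds ratios, each of which can be written in the form $\omega = \prod_{\x} p(\x)^{s(\x)}$ for an integer-valued exponent pattern $s(\x)$ that is supported on a sub-grid obtained by letting a chosen subset $S$ (with $|S|\ge 2$) of the variables range over two category values each — a reference value and an alternative — while the remaining coordinates are held fixed. The essential combinatorial property of these patterns, inherited from the $2\times 2$ odds ratio $\omega = p_{00}p_{11}/(p_{01}p_{10})$ whose exponents are $+1,+1,-1,-1$, is that they are \emph{balanced}: taking $s(\x)=(-1)^{\#\{i\in S:\,x_i=\mathrm{ref}\}}$ on the sub-grid and $0$ elsewhere, one has $\sum_{\x} s(\x) = 0$ and, for every variable $i$ and every category value $v$, $\sum_{\x :\, x_i = v} s(\x) = 0$.

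First I would write $q(\x) = \tfrac{1}{Z}\, p(\x)\prod_i \exp(\V[i,x_i])$ with $Z$ the normalizing constant, and compute the corresponding odds ratio in logarithmic form:
\[
\log \omega_q = \sum_{\x} s(\x)\,\log q(\x) = \sum_{\x} s(\x)\Big(\log p(\x) + \sum_i \V[i,x_i] - \log Z\Big).
\]
Then I would split the right-hand side into three pieces. The first is exactly $\log \omega_p$. The normalization piece is $-\log Z \cdot \sum_{\x} s(\x) = 0$ by balance. The per-variable piece rearranges as $\sum_i \sum_{v} \V[i,v]\big(\sum_{\x :\, x_i = v} s(\x)\big) = 0$, again by balance. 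Hence $\omega_q = \omega_p$ for every generalized odds ratio, and since the copula is determined by this family, $q$ and $p$ share the same copula.

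The main obstacle is not the cancellation itself — which is immediate once the balance property is in hand — but correctly importing the multivariate odds-ratio parameterization from \cref{appx:discrete-copula} and verifying that every odds ratio appearing in it has a balanced exponent pattern. Concretely, the identity $\sum_{\x :\, x_i = v} s(\x) = 0$ reduces, after fixing coordinate $i$, to an alternating sum over the remaining free coordinates in $S$, which vanishes by the binomial identity $\sum_{k}(-1)^k\binom{m}{k}=0$ for $m\ge 1$; I would check that this holds uniformly across all orders $|S|\ge 2$ used in the definition, which is exactly where the restriction to genuine (order $\ge 2$) dependence terms — as opposed to univariate marginals — is used. An equivalent and cleaner route, should the appendix define the copula as the equivalence class of $p$ under reweightings $p(\x)\mapsto p(\x)\prod_i g_i(x_i)$, is to note that $\prod_i \exp(\V[i,x_i])$ is precisely such a reweighting with $g_i(\cdot)=\exp(\V[i,\cdot])$, so the claim holds by definition; in that case I would still present the odds-ratio computation above as the concrete justification that the dependence structure is untouched.
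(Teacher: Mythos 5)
Your proposal is correct and takes essentially the same route as the paper: both rest on the parameterization of the discrete copula by conditional odds ratios and the fact that univariate scaling factors (and the normalizing constant) cancel in those ratios. The only difference is that the paper delegates the cancellation to \citet[Theorem 12.3]{rudas2018lectures}, whereas you verify it explicitly via the balanced exponent pattern and the binomial identity $\sum_k (-1)^k \binom{m}{k} = 0$ --- a check that is sound, with the restriction $|\bm{A}| \ge 2$ in the odds-ratio family being exactly what makes the per-variable, per-value sums vanish.
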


In general, \cref{prop:copula_is_invariant} holds because scaling factors (e.g., $\exp (\V [i, x_i])$) cancel in odds ratios. For example, in the $2 \!\times\! 2$ case in \cref{fig:copula-example}, scaling the top row of the probability table by $a$ would result in the odds ratio 
$\omega \!=\! \frac{a\p_{00} \p_{11}}{a\p_{01} \p_{10}} \!=\! \frac{\p_{00} \p_{11}}{\p_{01} \p_{10}}$.


\begin{figure}[t]
    \centering
    \includegraphics[width=\columnwidth]{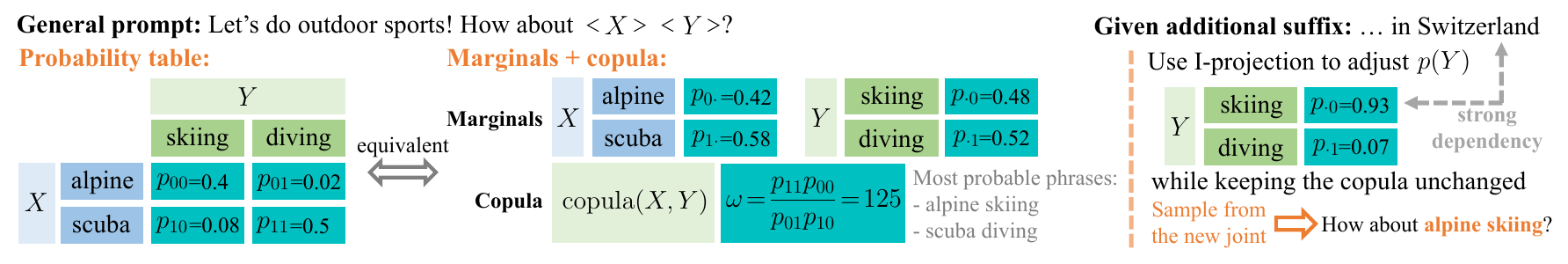}
    \vspace{-2.4em}
    \caption{Illustration of the decomposition of a distribution into univariate marginals and a copula.}
    \label{fig:copula-example}
    \vspace{-1.0em}
\end{figure}

\subsection{Modeling Dependence in Discrete Diffusion Models}
\label{sec:copula-dm}

Recall from \cref{sec:challenges} that our goal is to capture inter-variable dependencies between the output variables at each denoising step (\eg sampling $\x_{t}$ from $\q(\X_{t} \given \x_{t+1})$). Similar to the general case shown in \cref{sec:copula-general}, we first have a set of univariate marginals $\{\p_{\mathrm{dm}} (X_{t}^{i} \given \x_{t+1})\}_{i}$ from the diffusion model. Notably, these univariate marginals are fairly accurate since for both discrete-time and continuous-time diffusion models, if their respective training losses are minimized, the model recovers the true univariate marginals. This is formally justified in \cref{appx:dm-unbiased}.

Alongside the univariate marginals, we assume access to a copula model that encodes a distribution over $\X_{t}$. Following \cref{sec:copula-general}, combining the copula model's distribution with the univariate marginals from the diffusion model will lead to an improved estimate of $\q (\X_{t} \given \x_{t+1})$ (Prop.~\ref{prop:iproj-is-good}).


The performance of the augmented diffusion model hinges on two key questions: (i) how well can the copula model capture the inter-variable dependencies in $\q (\X_{t} \given \x_{t+1})$ (defined by the data distribution and the noising process); (ii) given a good copula distribution, how to effectively combine it with the univariate marginals obtained from the diffusion model, \ie how to solve \cref{eq:copula-obj}.

\section{Autoregressive Models as Copula Models}
\label{sec:ar-as-copula}

This section answers the two questions above tailored to the case where the copula model is an autoregressive model such as GPT \citep{radford2019language} and State Space Models \citep{dao2024transformers}. Specifically, \cref{sec:extract-copula-ar} discusses how to approximate $\q (\X_{t} \given \x_{t+1})$ using an autoregressive model trained on the clean data distribution $\p (\X_{0})$ under certain noising processes. \cref{sec:Iproj-ar} explores the process of performing I-projection from the (autoregressive) copula distribution onto the set of distributions with univariate marginals $\{\p_{\mathrm{dm}} (X_{t}^{i} \given \x_{t+1})\}_{i}$. Finally, \cref{sec:overall-sampling} summarizes the sampling procedure with a discrete diffusion model and an autoregressive copula model.

\subsection{Extracting Copula Distributions from Autoregressive Models}
\label{sec:extract-copula-ar}

At step $t$, to sample $\x_{t}$ conditioned on $\x_{t+1}$, we need a copula distribution $\p_{\mathrm{copula}} (\X_{t})$ that closely approximates $\q (\X_{t} \given \x_{t+1})$. While this might suggest that the copula model should also be trained with a diffusion model objective, which brings us back to the problem of modeling inter-variable dependencies, we show that any model trained on the clean data distribution can serve as a copula model that indirectly approximates $\q (\X_{t} \given \x_{t+1})$ under the absorbing mask forward noising process.

The absorbing mask noising process gradually converts data tokens in $\x_{0} \!\sim\! \p (\X_{0})$ to a new category denoted \texttt{<MASK>} through the sequence $\x_{1}, \dots, \x_{T}$. Specifically, each token in $\x_{0}$ is independently converted to \texttt{<MASK>} with probabilities $0 \!<\! \alpha_{1} \!<\! \dots \!<\! \alpha_{T} \!=\! 1$ in $\x_{1}, \dots, \x_{T}$, respectively. This is a widely used noising strategy for discrete diffusion models. Since this process only transforms data tokens into the mask token, it preserves the dependencies between the remaining unmasked tokens. Therefore, we can decompose $\q (\X_{t} \given \x_{t+1})$ as $\q (\x_{t} \given \x_{t+1}) \!=\! \sum_{\tilde{\x}_{t}} \q (\tilde{\x}_{t} \given \x_{t+1}) \q (\x_{t} \given \tilde{\x}_{t}, \x_{t+1})$, where $\q (\tilde{\x}_{t} \given \x_{t+1})$ is inuitively capturing the joint distribution of generating all currently masked tokens, and $\q (\x_{t} \given \tilde{\x}_{t}, \x_{t+1})$ captures only the choice of which currently masked tokens will actually be generated. Formally, define $I$ as the set of variables $i$ such that $x_{t+1}^{i} \!=\! \text{\texttt{<MASK>}}$ and $J$ as its complement. The auxiliary distributions have the following form.

\begin{prop}
\label{prop:mask-can-be-separated}
    Assume $\p (\X_{0})$ is the clean data distribution and $\{\q (\X_{t} \given \x_{t-1})\}_{t=1}^{T}$ follows the absorbing mask noising process. Let $\alpha_{t}$ be the probability of conversion to the mask state from $X_{0}^{i}$ to $X_{t}^{i}$ ($\forall i$). Define $\tilde{\X}_{t}$ as a set of auxiliary variables such that
        {\setlength{\abovedisplayskip}{0.4em}
        \setlength{\belowdisplayskip}{0.0em}
        \begin{align}
            \q (\tilde{\x}_{t} \given \x_{t+1}) = \p (\X_{0}^{I} = \tilde{\x}_{t}^{I} \given \X_{0}^{J} = \x_{t+1}^{J}) \cdot \mathbbm{1} [ \tilde{\x}_{t}^{J} = \x_{t+1}^{J} ].\label{eq:aux-denoising-dist}
        \end{align}}

    Then, the distribution $\q(\X_{t} \given \tilde{\x}_{t}, \x_{t+1})$ is the following: $\q(\X_{t} \given \tilde{\x}_{t}, \x_{t+1}) \!=\! \prod_{i} \q (x_{t}^{i} \given \tilde{x}_{t}^{i}, x_{t+1}^{i})$. 

    \vspace{-0.2em}
    
    -- For $i \!\in\! I$, $\q (x_{t}^{i} \given \tilde{x}_{t}^{i}, x_{t+1}^{i})$ equals $\alpha_{t} / \alpha_{t+1}$ if $x_{t}^{i} \!=\! \text{\texttt{<MASK>}}$ and equals $1 \!-\! \alpha_{t} / \alpha_{t+1}$ if $x_{t}^{i} \!=\! \tilde{x}_{t}^{i}$. 

    \vspace{-0.2em}
    
    -- For $i \!\in\! J$, $\q (x_{t}^{i} \given \tilde{x}_{t}^{i}, x_{t+1}^{i}) \!=\! 1$ if and only if $x_{t}^{i} \!=\! x_{t+1}^{i}$.
    \vspace{-0.2em}
\end{prop}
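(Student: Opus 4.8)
The plan is to realize the auxiliary variables $\tilde{\X}_t$ concretely inside the forward-process joint over $(\X_0, \X_1, \dots, \X_T)$ and to exploit two structural features of the absorbing-mask process: it is \emph{monotone} (once a coordinate is converted to $\text{\texttt{<MASK>}}$ it stays masked, so the event $\{x_t^i = \text{\texttt{<MASK>}}\}$ is contained in $\{x_{t+1}^i = \text{\texttt{<MASK>}}\}$ for $t < t+1$), and the per-coordinate masking events are mutually independent and independent of the clean values $\X_0$. First I would identify $\tilde{\X}_t$ with $\X_0$: conditioning on $\x_{t+1}$ is exactly conditioning on the event that every coordinate in $J$ is still unmasked — hence reveals $X_0^J = \x_{t+1}^J$ — while every coordinate in $I$ has been masked by time $t+1$. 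Because masking is independent of the token values, the conditional law of $\X_0$ given $\x_{t+1}$ is $\p(\X_0^I \given \X_0^J = \x_{t+1}^J)$ on $I$ together with the deterministic value $\x_{t+1}^J$ on $J$, which is precisely \cref{eq:aux-denoising-dist}. This identification also supplies the coupling between $\tilde{\X}_t$ and $\X_t$ that the equation alone leaves unspecified, so that $\q(\X_t \given \tilde{\x}_t, \x_{t+1}) = \q(\X_t \given \X_0 = \tilde{\x}_t, \X_{t+1} = \x_{t+1})$.

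With this in hand, the factorization claim follows immediately: conditioned on $\X_0 = \tilde{\x}_t$, the forward law of $(\X_t, \X_{t+1})$ is a product over coordinates, and conditioning further on $\X_{t+1} = \x_{t+1}$ preserves that product structure, giving $\q(\X_t \given \tilde{\x}_t, \x_{t+1}) = \prod_i \q(x_t^i \given \tilde{x}_t^i, x_{t+1}^i)$ with each factor depending only on coordinate $i$.

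Next I would compute each single-coordinate conditional from the masking structure. For $i \in J$ we have $x_{t+1}^i \neq \text{\texttt{<MASK>}}$, so coordinate $i$ is unmasked at $t+1$ and, by monotonicity, unmasked at the earlier time $t$ as well; hence $x_t^i = X_0^i = x_{t+1}^i$ deterministically, which is the $i \in J$ clause. For $i \in I$ we condition on $\{X_{t+1}^i = \text{\texttt{<MASK>}}\}$, i.e.\ that coordinate $i$ was masked by time $t+1$; a Bayes computation using monotonicity gives $\Pr[x_t^i = \text{\texttt{<MASK>}} \mid X_{t+1}^i = \text{\texttt{<MASK>}}] = \alpha_t / \alpha_{t+1}$, and on the complementary event (probability $1 - \alpha_t/\alpha_{t+1}$) the coordinate is unmasked at $t$ and therefore equals its clean value $X_0^i = \tilde{x}_t^i$. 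These are exactly the two values stated for the $i \in I$ clause.

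The step I expect to require the most care is the identification in the first paragraph rather than the per-coordinate arithmetic: \cref{eq:aux-denoising-dist} specifies only the marginal of $\tilde{\X}_t$ given $\x_{t+1}$, so the content of the proposition really rests on choosing the right coupling and then checking that marginalizing $\tilde{\X}_t$ back out reproduces the genuine posterior $\q(\X_t \given \x_{t+1}) = \sum_{\tilde{\x}_t} \q(\tilde{\x}_t \given \x_{t+1})\, \q(\X_t \given \tilde{\x}_t, \x_{t+1})$. The conceptual crux is that the \emph{cross-coordinate} dependence of the true reverse posterior — the jointly sampled clean completions of the masked positions — is carried entirely by $\tilde{\X}_t$, so that once $\tilde{\x}_t$ is fixed the remaining per-coordinate mask-versus-reveal decisions become independent; verifying this consistency, and that the clean value left on a coordinate which happens to stay masked at $t$ integrates out harmlessly, is what makes the factorization legitimate rather than an unjustified independence assumption.
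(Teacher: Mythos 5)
Your proposal is correct, but it takes a genuinely different route from the paper's. The paper works by verification: it computes $\q(\x_{t} \given \x_{t+1})$ via Bayes' rule, derives the mask probability $\alpha_{t}/\alpha_{t+1}$ and the identity in \cref{eq:aux-denoising-dist} from explicit sums over $\x_{0}$, and then confirms the factorized conditional by a case analysis (over which coordinates of $\x_{t}$ remain masked) showing that the proposed decomposition satisfies the marginalization identity $\q(\x_{t} \given \x_{t+1}) = \sum_{\tilde{\x}_{t}} \q(\tilde{\x}_{t} \given \x_{t+1}) \cdot \q(\x_{t} \given \tilde{\x}_{t}, \x_{t+1})$, concluding equality from proportionality of two normalized distributions (\cref{eq:proof-5-4} through \cref{eq:proof-5-7}). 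You instead construct the coupling up front by realizing $\tilde{\X}_{t}$ as $\X_{0}$ inside the forward joint: monotonicity of absorbing masking plus independence of the masking events from token values gives the posterior of $\X_{0}$ given $\x_{t+1}$ exactly as in \cref{eq:aux-denoising-dist}; conditioning on $\X_{0}$ makes the forward law a product over coordinates, so conditioning further on $\x_{t+1}$ preserves the product; and the per-coordinate Bayes step $\Pr[X_{t}^{i} = \text{\texttt{<MASK>}} \given X_{t+1}^{i} = \text{\texttt{<MASK>}}] = \alpha_{t}/\alpha_{t+1}$ recovers the stated factors. The decisive advantage of your construction is that the consistency check occupying most of the paper's proof becomes automatic by the law of total probability, since $\X_{t}$, $\tilde{\X}_{t}$, and $\x_{t+1}$ all live in one genuine joint distribution — you correctly identified this as the crux, namely that the proposition only pins down the conditional of $\tilde{\X}_{t}$ given $\x_{t+1}$, so the real content is exhibiting a consistent coupling. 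You also rightly note that the factor for $i \in I$ with $x_{t}^{i} = \text{\texttt{<MASK>}}$ is constant in $\tilde{x}_{t}^{i}$, so the unused clean value marginalizes out harmlessly. What the paper's longer computation buys in exchange is a set of explicit intermediate formulas (notably \cref{eq:proof-5-2} and \cref{eq:proof-5-3b}, and the validity of \cref{eq:proof-5-4} for all $\x_{t}$) that are reused verbatim in the proof of \cref{prop:aux-marginals-match}; if your argument replaced the paper's, that subsequent proof would need to cite the law-of-total-probability identity from your coupling instead, which it supplies equally well.
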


Since $\q (\X_{t} \given \tilde{\x}_{t}, \x_{t+1})$ is fully factorized, the copula model only needs to account for inter-variable dependencies in $\q (\tilde{\X}_{t} \given \x_{t+1})$. Following \cref{eq:aux-denoising-dist}, we can transform $\p_{\mathrm{copula}} (\X_{0})$, which estimates the clean data distribution, into $\p_{\mathrm{copula}} (\tilde{\X}_{t} \given \x_{t+1})$ that approximates $\q (\tilde{\X}_{t} \given \x_{t+1})$ by conditioning it on the unmasked tokens in $\x_{t+1}$ (\ie $\x_{t+1}^{J}$). Specifically, for autoregressive copula models (\ie $\p_{\mathrm{copula}} (\x) \!:=\! \prod_{i} \p_{\mathrm{copula}} (x_{i} \given \x_{<i})$), we construct $\p_{\mathrm{copula}} (\tilde{\X}_{t} \given \x_{t+1})$ by conditioning each variable on the corresponding preceding tokens in $\x_{t+1}^{J}$ while enforcing $\tilde{x}_{t}^{j} \!=\! x_{t+1}^{j}$ ($\forall j \!\in\! J$):
    {
    \setlength{\belowdisplayskip}{-0.2em}
    \begin{align}
        \p_{\mathrm{copula}} (\tilde{\x}_{t} \given \x_{t+1}) := \prod_{i \in I} \p_{\mathrm{copula}} (X_{0}^{i} = \tilde{x}_{t}^{i} \given \X_{0}^{<i} = \tilde{\x}_{t}^{<i}) \cdot \prod_{j \in J} \mathbbm{1} [ \tilde{x}_{t}^{j} = x_{t+1}^{j} ]. \label{eq:ar-copula-dgm}
    \end{align}}

This copula distribution is biased even if the autoregressive model perfectly captures the data distribution since it cannot condition on subsequent unmasked tokens in $\x_{t+1}$. In contrast, while being able to condition on all unmasked tokens, diffusion models cannot capture dependence between variables. Combining the two estimates in a proper way will lead to better empirical performance.

Continuing with the example in \cref{fig:copula-example}, we assume an autoregressive copula model encodes the probability table on the left. As shown on the right, when provided with the suffix prompt ``in Switzerland'', the copula model alone cannot adjust its probabilities, as it can only condition on prefix prompts. However, a diffusion model that captures the strong dependence between ``Switzerland'' and $Y \!=\! \text{``skiing''}$ can, through I-projection, set the correct marginal probabilities of $Y$, while keeping the copula unchanged. This allows the model to reliably generate “how about alpine skiing.”

Lastly, we need the univariate marginals of $\q (\tilde{\X}_{t} \given \x_{t+1})$, which can be derived by renormalizing $\{\q (X_{t}^{i} \given \x_{t+1})\}_{i}$ to zero out the probability of the mask state according to the following result.


\begin{prop}
\label{prop:aux-marginals-match}
    For each $i$ and data category $c \!\neq\! \text{\texttt{<MASK>}}$, $\q (\tilde{X}_{t}^{i} = c \given \x_{t+1}) \propto \q (X_{t}^{i} = c \given \x_{t+1})$.
    \vspace{-0.4em}
\end{prop}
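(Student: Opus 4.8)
\textbf{Proof plan for \cref{prop:aux-marginals-match}.}

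The plan is to unfold the definition of $\q(\tilde{X}_t^i \given \x_{t+1})$ via \cref{eq:aux-denoising-dist} and check that, for any fixed data category $c \neq \text{\texttt{<MASK>}}$, the marginal of $\tilde{X}_t^i$ over that category is proportional to the diffusion model's marginal $\q(X_t^i = c \given \x_{t+1})$, with a proportionality constant that is independent of $c$. First I would dispose of the easy case $i \in J$: here \cref{eq:aux-denoising-dist} forces $\tilde{x}_t^i = x_{t+1}^i$ deterministically, and since $x_{t+1}^i$ is already an unmasked data token, the marginal of $\tilde{X}_t^i$ is a point mass at $x_{t+1}^i$; the same holds for $X_t^i$ under the absorbing process restricted to an unmasked coordinate, so proportionality is immediate. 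The substantive case is $i \in I$, where $x_{t+1}^i = \text{\texttt{<MASK>}}$.

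For $i \in I$, I would begin from the marginal of $\q(X_t^i = c \given \x_{t+1})$ and relate it to $\tilde{\X}_t$. The key structural fact is the chain $\x_{t+1} \to \tilde{\x}_t \to \x_t$ established by \cref{prop:mask-can-be-separated}: the auxiliary variable $\tilde{\X}_t$ captures which clean tokens would be generated, and then $\q(\X_t \given \tilde{\x}_t, \x_{t+1})$ applies an independent per-coordinate re-masking. Concretely, for $i \in I$ the factorized form in \cref{prop:mask-can-be-separated} gives $\q(x_t^i = c \given \tilde{x}_t^i, x_{t+1}^i) = (1 - \alpha_t/\alpha_{t+1}) \cdot \mathbbm{1}[c = \tilde{x}_t^i]$ for any data category $c$, and $\alpha_t/\alpha_{t+1}$ for the mask state. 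Marginalizing over $\tilde{\X}_t$,
\begin{align*}
    \q(X_t^i = c \given \x_{t+1}) = \sum_{\tilde{\x}_t} \q(\tilde{\x}_t \given \x_{t+1}) \cdot \Big(1 - \tfrac{\alpha_t}{\alpha_{t+1}}\Big) \cdot \mathbbm{1}[\tilde{x}_t^i = c] = \Big(1 - \tfrac{\alpha_t}{\alpha_{t+1}}\Big) \cdot \q(\tilde{X}_t^i = c \given \x_{t+1}),
\end{align*}
where the last equality collapses the sum to the marginal of $\tilde{X}_t^i$ at $c$. Since the factor $1 - \alpha_t/\alpha_{t+1}$ does not depend on $c$, this is exactly the claimed proportionality.

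The only delicate point — and the step I expect to require the most care — is verifying that the cross-coordinate summation genuinely factors through, \ie that summing over all $\tilde{x}_t^{i'}$ for $i' \neq i$ leaves the $i$-coordinate marginal of $\q(\tilde{\X}_t \given \x_{t+1})$ intact rather than entangling it with the per-coordinate re-masking factors of the other coordinates. This is where I would lean on the full factorization $\q(\X_t \given \tilde{\x}_t, \x_{t+1}) = \prod_i \q(x_t^i \given \tilde{x}_t^i, x_{t+1}^i)$ from \cref{prop:mask-can-be-separated}: summing $\q(x_t^{i'} \given \tilde{x}_t^{i'}, x_{t+1}^{i'})$ over the value of $x_t^{i'}$ for each $i' \neq i$ yields $1$, so those coordinates drop out cleanly and only the $i$-th factor survives. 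Making this marginalization-of-the-nuisance-coordinates argument explicit, together with the $i \in J$ boundary case, completes the proof; the renormalization claim then follows because $\tilde{X}_t^i$ places no mass on the mask state, so the shared constant is fixed by normalization over data categories.
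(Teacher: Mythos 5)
Your proposal is correct and takes essentially the same route as the paper's proof: both pass through the chain $\x_{t+1} \to \tilde{\x}_{t} \to \x_{t}$ from \cref{prop:mask-can-be-separated}, collapse the nuisance coordinates via the full factorization of $\q(\X_{t} \given \tilde{\x}_{t}, \x_{t+1})$, extract the $c$-independent factor $1 - \alpha_{t}/\alpha_{t+1} = (\alpha_{t+1}-\alpha_{t})/\alpha_{t+1}$, and dispose of $i \in J$ trivially. The ``delicate point'' you flag is precisely the step the paper carries out implicitly when it specializes \cref{eq:proof-5-4} to a single coordinate and sums over $\tilde{x}_{t}^{i}$ and $\tilde{\x}_{t}^{I \backslash \{i\}}$, so your explicit nuisance-marginalization argument fills in the same content and no gap remains.
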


As a result, for each $i$, the distribution $\p_{\mathrm{dm}} (\tilde{X}_{t}^{i} \given \x_{t+1})$ can be similarly obtained by renormalizing $\p_{\mathrm{dm}} (X_{t}^{i} \given \x_{t+1})$, which is directly obtained from the denoising model, to exclude the mask state.

\subsection{Approximate I-Projection with Autoregressive Models}
\label{sec:Iproj-ar}

\begin{figure}[t]
\vspace{-0.8em}
\begin{algorithm}[H]
\caption{Draw samples from a discrete diffusion model with the help of a copula model}
\label{alg:sampling}
{\fontsize{9}{9} \selectfont
\begin{algorithmic}[1]

\STATE {\bfseries Inputs:} a diffusion model $\p_{\mathrm{dm}}$, a copula model $\p_{\mathrm{copula}}$, number of time steps $T$

\vspace{0.2em}

\STATE {\bfseries Outputs:} a sample $\x_0$ from the discrete diffusion model augmented by the copula model

\vspace{0.2em}

\STATE {\bfseries Initialize:} Sample $\x_{T}$ from the prior noise distribution $\p (\X_{T})$

\vspace{0.2em}

\STATE {\bfseries for} \tikzmarknode{a1}{} $t = T\!-\!1$ {\bfseries to} $0$

\vspace{0.1em}

\STATE \hspace{0.75em} Compute $\{\p_{\mathrm{dm}}(\tilde{X}_{t}^{i} \given \x_{t+1})\}_{i}$ and $\{\p_{\mathrm{dm}}(\tilde{X}_{t}^{i} \given \x_{t+1}^{<i})\}_{i}$ using the diffusion model

\vspace{0.2em}

\STATE \hspace{0.75em} Compute $\V[i,\tilde{x}_{t}^{i}] \!=\! \log \p_{\mathrm{dm}} (\tilde{x}_{t}^{i} \given \x_{t+1}) - \log \p_{\mathrm{dm}} (\tilde{x}_{t}^{i} \given \x_{t+1}^{<i})$ ($\forall i, \tilde{x}_{t}^{i}$) following \cref{eq:copula-update-new}

\vspace{0.1em}

\STATE \hspace{0.75em} Sample $\tilde{\x}_{t}$ from $\hat{\p} (\tilde{\x}_{t} \given \x_{t+1}) \!\propto\! \p_{\mathrm{copula}} (\tilde{\x}_{t} \given \x_{t+1}) \!\cdot\! \prod_{i} \exp ( \V [i, \tilde{x}_{t}^{i}] )$ ($\p_{\mathrm{copula}}$ is defined by \cref{eq:ar-copula-dgm})

\vspace{0.2em}

\STATE \hspace{0.75em} Sample $\x_{t}$ from $\q (\X_{t} \given \tilde{\x}_{t}, \x_{t+1})$ (defined in \cref{prop:mask-can-be-separated})

\end{algorithmic}
}    
\end{algorithm}
\begin{tikzpicture}[overlay,remember picture]
    \draw[black,line width=0.6pt] ([xshift=-12pt,yshift=-4pt]a1.west) -- ([xshift=-12pt,yshift=-52pt]a1.west) -- ([xshift=-8pt,yshift=-52pt]a1.west);
\end{tikzpicture}
\vspace{-3.6em}
\end{figure}

Given univariate marginals $\{\p_{\mathrm{dm}} (\tilde{X}_{t}^{i} \given \x_{t+1})\}_{i}$ and an autoregressive copula distribution $\p_{\mathrm{copula}} (\tilde{\X}_{t} \given \x_{t+1})$, both of which estimate the target distribution $\q(\tilde{\X}_{t} \given \x_{t+1})$, our goal is to combine them following the I-projection procedure described in \cref{sec:copula-general}. Specifically, this involves solving the convex optimization problem in \cref{eq:copula-obj}, which is specialized to the following:
    {\setlength{\abovedisplayskip}{0.4em}
    \setlength{\belowdisplayskip}{0.0em}
    \begin{align}
        \sum_{\tilde{\x}_{t}} \p_{\mathrm{copula}} (\tilde{\x}_{t} \given \x_{t+1}) \cdot \prod_{i} \exp ( \V [i, \tilde{x}_{t}^{i}] ) - \sum_{i = 1}^{N} \sum_{\tilde{x}_{t} = 1}^{C} \V [i, \tilde{x}_{t}^{i}] \cdot \p_{\mathrm{dm}} (\tilde{x}_{t}^{i} \given \x_{t+1}). \label{eq:sp-copula-obj}
    \end{align}}

Following \cref{thm:copula-obj}, if $\V$ minimizes \cref{eq:sp-copula-obj}, then the distribution defined by $\hat{\p} (\tilde{\x}_{t} \given \x_{t+1}) \!=\! \p_{\mathrm{copula}} (\tilde{\x}_{t} \given \x_{t+1}) \!\cdot\! \prod_{i} \exp (\V [i, \tilde{x}_{t}^{i}])$ is the I-projection of $\p_{\mathrm{copula}} (\tilde{\x}_{t} \given \x_{t+1})$ onto the set of distributions with the univariate marginals $\{\p_{\mathrm{dm}} (\tilde{X}_{t}^{i} \given \x_{t+1})\}_{i}$, which is the desired combined distribution.

Consider initializing all coefficients in $\V$ to zero, \ie $\hat{\p} (\tilde{\x}_{t} \given \x_{t+1}) \!=\! \p_{\mathrm{copula}} (\tilde{\x}_{t} \given \x_{t+1})$. For each row $i$, if we only optimize the values $\V [i,:]$ and fix the rest to zero, the optimal coefficients are
    {
    \setlength{\abovedisplayskip}{0.6em}
    \setlength{\belowdisplayskip}{0.2em}
    \begin{align}
        \forall c, \; \V[i, c] = \log \p_{\mathrm{dm}} (\tilde{X}_{t}^{i} = c \given \x_{t+1}) - \log \p_{\mathrm{copula}} (\tilde{X}_{t}^{i} = c \given \x_{t+1}). \label{eq:copula-update}
    \end{align}}

We approximate the solution to \cref{eq:sp-copula-obj} by applying the above update (Eq.~(\ref{eq:copula-update})) to each row in $\V$ independently, as it strikes a proper balance between efficiency and empirical performance.

While the first term on the right-hand side of \cref{eq:copula-update} can be acquired from the diffusion model, the second term is not accessible through the copula model. Plug in the definition in \cref{eq:ar-copula-dgm}, the required marginal probabilities can be written as (for $j \!\in\! J$, $\p_{\mathrm{copula}} (\tilde{x}_{t}^{j} \given \x_{t+1}) \!=\! 1$ iff $\tilde{x}_{t}^{j} \!=\! x_{t+1}^{j}$)
    {\setlength{\abovedisplayskip}{0.6em}
    \setlength{\belowdisplayskip}{0.2em}
    \begin{align}
        \forall i \!\in\! I, \; \p_{\mathrm{copula}} (\tilde{x}_{t}^{i} \given \x_{t+1}) = \p_{\mathrm{copula}} (X_{i} = \tilde{x}_{t}^{i} \given \X_{K_{i}} \!=\! \x_{t+1}^{K_{i}}), \text{~where~} K_{i} \!=\! \{j : j \!\in\! J \text{~and~} j \!<\! i\}. \label{eq:ar-copula-marginal}
    \end{align}}

The above probabilities cannot be computed from the autoregressive model since we need to ``marginalize out'' preceding tokens that are not in $K_{i}$ (\ie those not given as evidence in $\x_{t+1}$). However, these terms can be estimated using the diffusion model. Assume both the diffusion model and the autoregressive model perfectly encode the data distribution. According to \cref{prop:aux-marginals-match}, the diffusion model computes $\p_{\mathrm{dm}} (\tilde{X}_{t}^{i} \given \x_{t+1}) \!=\! \q (\tilde{X}_{t}^{i} \given \x_{t+1})$. Comparing it to \cref{eq:ar-copula-marginal}, which gives $\p_{\mathrm{copula}} (\tilde{X}_{t}^{i} \given \x_{t+1}) \!=\! \q (\tilde{X}_{t}^{i} \given \x_{t+1}^{K_{i}})$, we only need to additionally restrict the diffusion model to only condition on preceding unmasked tokens in $\x_{t+1}$, since $K_{i}$ is the intersection of $J$ and $\{j: j \!<\! i\}$. Therefore, if both models well-approximate the data distribution, we have $\p_{\mathrm{copula}} (\tilde{x}_{t}^{i} \given \x_{t+1}) \!\approx\! \q (\tilde{x}_{t}^{i} \given \x_{t+1}^{K_{i}}) \!=\! \q (\tilde{x}_{t}^{i} \given \x_{t+1}^{<i}) \!\approx\! \p_{\mathrm{dm}} (\tilde{x}_{t}^{i} \given \x_{t+1}^{<i})$, where the equality holds since all values in $\x_{t+1}^{<i}$ but not in $\x_{t+1}^{K_{i}}$ are \texttt{<MASK>}, and does not ``contribute to'' the distribution of $\tilde{X}_{t}^{i}$ according to \cref{prop:mask-can-be-separated}). Correspondingly, we update $\V$ following
    {
    \setlength{\abovedisplayskip}{0.6em}
    \setlength{\belowdisplayskip}{0.2em}
    \begin{align}
        \forall i, c, \; \V[i, c] = \log \p_{\mathrm{dm}} (\tilde{X}_{t}^{i} = c \given \x_{t+1}) - \log \p_{\mathrm{dm}} (\tilde{X}_{t}^{i} = c \given \x_{t+1}^{<i}). \label{eq:copula-update-new}
    \end{align}}

For denoising neural networks that are implemented with bidirectional Transformers, we can simply apply causal attention masks to the self-attention layers to obtain $\{\p_{\mathrm{dm}} (\tilde{X}_{t}^{i} \given \x_{t+1}^{<i})\}_{i}$.

\subsection{The Overall Diffusion Sampling Process}
\label{sec:overall-sampling}

Given a diffusion model $\p_{\mathrm{dm}}$ and an autoregressive copula model $\p_{\mathrm{copula}}$, the sampling procedure is outlined in \cref{alg:sampling}. First, we sample $\x_{T}$ from the prior noise distribution $\p(\X_{T})$ (line 3). During each denoising step $t$, we compute the univariate marginals $\{\p_{\mathrm{dm}} (\tilde{X}_{t}^{i} \given \x_{t+1})\}_{i}$ and $\{\p_{\mathrm{dm}} (\tilde{X}_{t}^{i} \given \x_{t+1}^{<i})\}_{i}$ based on the previously obtained $\x_{t+1}$ (line 5). These marginals are then used to compute the entries in $\V$ (line 6), which approximates the I-projection of $\p_{\mathrm{copula}} (\tilde{\X}_{t} \given \x_{t+1})$ onto the set of distributions with univariate marginals $\{\p_{\mathrm{dm}} (\tilde{X}_{t}^{i} \given \x_{t+1})\}_{i}$ (\cf Sec.~\ref{sec:Iproj-ar}). 

Afterwards, we sample $\tilde{\x}_{t}$ from the combined distribution $\hat{\p} (\tilde{\X}_{t} \given \x_{t+1})$ (line 7). Specifically, following \cref{eq:ar-copula-dgm}, we sample autoregressively following $\hat{\p} (\tilde{\x}_{t} \given \x_{t+1}) \!=\! \prod_{i} \hat{\p} (\tilde{x}_{t}^{i} \given \x_{t+1}, \tilde{\x}_{t}^{<i})$, where
    {
    \setlength{\abovedisplayskip}{0.6em}
    \setlength{\belowdisplayskip}{0.0em}
    \begin{align*}
        \hat{\p} (\tilde{x}_{t}^{i} \given \x_{t+1}, \tilde{\x}_{t}^{<i}) \propto \p_{\mathrm{copula}} (X_{i} = \tilde{x}_{t}^{i} \given \X_{<t} = \tilde{\x}_{t}^{<i}) \cdot \exp(\V [i, \tilde{x}_{t}^{i}]) \cdot \mathbbm{1} [\tilde{x}_{t}^{i} = x_{t+1}^{i}].
    \end{align*}}

Finally, we sample $\x_{t}$ from $\q(\X_{t} \given \tilde{\x}_{t}, \x_{t+1})$ (line 8) as defined in \cref{prop:mask-can-be-separated}. To improve the algorithm's efficiency, we introduce a variant that unmasks tokens in an autoregressive manner. Specifically, at step $t$, all tokens except the first $(T \!-\! t) / T$ portion of the tokens in $\x_{t}$ are converted to \texttt{<MASK>}. Since $\hat{\p}$ is sampled autoregressively, this allows us to use techniques such as KV-caching for autoregressive Transformers \citep{pope2023efficiently} to significantly reduce computation cost introduced by the copula model. See \cref{appx:ar-iproj} for details and the concrete algorithm.

\vspace{-0.5em}
\section{Experiments}
\label{sec:exp}
\vspace{-0.5em}

We empirically validate the proposed method, \textbf{D}iscrete \textbf{C}opula \textbf{D}iffusion (DCD), on language modeling tasks (Sec.~\ref{sec:exp-uncond-text}~and~\ref{sec:exp-cond-text}) and antibody sequence infilling tasks (Sec.~\ref{sec:exp-protein-infill}). For all tasks, we evaluate whether DCD can effectively reduce the number of diffusion steps while maintaining strong performance. Specifically, since DCD combines two pretrained models: a discrete diffusion model and an autoregressive copula model, we examine whether DCD outperforms each individual model.

\vspace{-0.5em}
\subsection{Unconditional Text Generation}
\label{sec:exp-uncond-text}
\vspace{-0.5em}

We first compare the quality of unconditional samples generated by models trained on either WebText \citep{radford2019language} or OpenWebText \citep{gokaslan2019openweb}, which contain web content extracted from URLs shared on Reddit with a minimum number of upvotes. We adopt the medium-sized SEDD model \citep{lou2024discrete} ($\text{SEDD}_{\text{\texttt{M}}}$) since it is a SoTA discrete diffusion model for text generation. The GPT-2-small model \citep{radford2019language} ($\text{GPT-2}_{\text{\texttt{S}}}$) serves as the copula model.

We generate samples of $128$ tokens each. Following \citet{han2023ssd,dieleman2022continuous}, we evaluate sample quality using their generative perplexity, which is the perplexity of the samples when evaluated with the GPT-2-large model. Since previous studies have observed that this metric can be affected by distribution annealing methods such as nucleus sampling, we always sample directly from the models. $\text{SEDD}_{\text{\texttt{M}}}$ is evaluated with $2$ to $256$ diffusion steps and DCD (\ie $\text{SEDD}_{\text{\texttt{M}}}$ with $\text{GPT-2}_{\text{\texttt{S}}}$ as the copula model) is run with diffusion steps ranging from $2$ to $32$. We adopt the log-linear noise schedule suggested by the SEDD paper. See \cref{appx:exp-details-uncond-text} for more details.

\begin{figure}[t]
    \centering
    \begin{minipage}{0.4\columnwidth}
        \includegraphics[width=\linewidth]{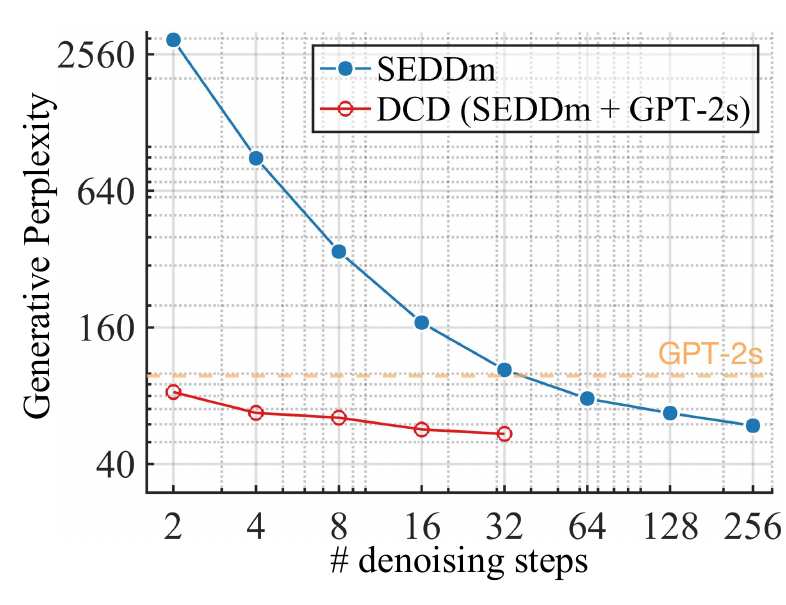}
        \vspace{-2.4em}
        \hspace{0.6em}
        \caption{Generative perplexity ($\downarrow$) with \\ different numbers of denoising steps.}
        \label{fig:uncond-gen}
    \end{minipage}\hfill
    \begin{minipage}{0.55\columnwidth}
        \includegraphics[width=\linewidth]{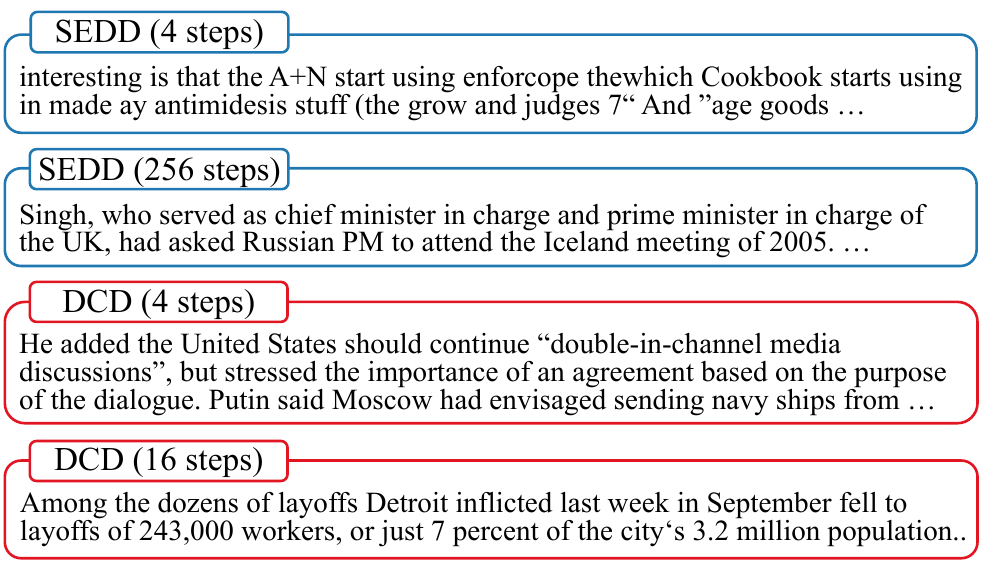}
        \vspace{-2.0em}
        \caption{Generated text from $\text{SEDD}_{\text{\texttt{M}}}$ and DCD with different number of steps. See \cref{appx:additional-text-samples} for more.}
        \label{fig:some-samples}
    \end{minipage}
    \vspace{-0.8em}
\end{figure}

For each configuration, we draw 10,000 samples and report the average perplexity in \cref{fig:uncond-gen}. First, when fixing the number of denoising steps between $2$ to $32$, we observe that DCD outperforms both $\text{SEDD}_{\text{\texttt{M}}}$ with the same number of denoising steps and $\text{GPT-2}_{\text{\texttt{S}}}$. This provides empirical validation of the effectiveness of the I-projection procedure for modeling inter-variable dependencies.

Additionally, DCD with just $4$ denoising steps achieves performance comparable to $\text{SEDD}_{\text{\texttt{M}}}$ with $128$ steps, representing a $32$x reduction in the number of denoising steps. This result not only demonstrates the efficiency of DCD but also underscores the importance of modeling inter-variable dependencies in discrete diffusion models, particularly in few-step generation settings.

Finally, as shown in \cref{fig:some-samples}, SEDD fails to generate fluent and meaningful sentences given only a few diffusion steps, as too many tokens have to be generated in each step. In contrast, by modeling the inter-variable dependencies, DCD generates smooth sentences with only $4$ denoising steps.

\boldparagraph{Efficiency.}
We compare the sample time and the generative perplexity of DCD against competitive baselines in \cref{fig:runtime-performance}. We additionally adopt another recent discrete diffusion baseline MDLM \citep{sahoo2024simple}. We adopt the autoregressive version of DCD as described in \cref{sec:overall-sampling} and \cref{appx:ar-iproj}. Compared to the baselines, DCD consistently achieves better generative perplexity given a fixed runtime constraint. It also requires less time to reach a desired perplexity value. We defer a comprehensive study of DCD's efficiency to \cref{appx:additional-uncond-exps}.


\subsection{Conditional Text Generation}
\label{sec:exp-cond-text}

We now move on to conditional text generation, where certain tokens are provided in advance, and the task is to generate the remaining tokens. As shown in the first column of \cref{tab:wikitext-infilling}, we use five mask strategies, where tokens in specific prompt ranges are given (we use a sequence length of $128$). We adopt the MAUVE score \citep{pillutla2021mauve} with the default settings to compare the difference between the generated and original texts. See \cref{appx:exp-details-cond-text} for further details.

For all methods, we use the same set of 2,000 text sequences from the validation set of WikiText-103 \citep{merity2022pointer}. After applying the prompt mask, we generate $5$ samples for each prompt, resulting in a total number of 10,000 samples.

In addition to $\text{SEDD}_{\text{\texttt{M}}}$ and $\text{GPT-2}_{\text{\texttt{S}}}$, we compare against SSD-LM \citep{han2023ssd}, which is a semi-autoregressive diffusion model designed for text infilling. We adopt the autoregressive unmasking variant of DCD described in the last paragraph of \cref{sec:overall-sampling}.

Results are presented in \cref{tab:wikitext-infilling}. First, DCD outperforms all three baselines in all five tasks. Additionally, when fixing the number of denoising steps between 2 and 32, DCD surpasses both of its base models. Notably, while both $\text{GPT-2}_{\text{\texttt{S}}}$ and the 2-step $\text{SEDD}_{\text{\texttt{M}}}$ performs poorly on the first, the second, and the fifth tasks, combining them in a principled way allows DCD to achieve significantly better performance using only two denoising steps.

\begin{table}[t]
    \centering
    \caption{Evaluation of text infilling performance using the MAUVE score ($\uparrow$) with 5 prompt masks. Scores of DCD are all better than (i) SEDD with the same \# denoising steps, and (ii) $\text{GPT-2}_{\text{\texttt{S}}}$.
    }
    \label{tab:wikitext-infilling}
    \vspace{0.1em}

    \renewcommand{\arraystretch}{1.1}
    \centering
    \scalebox{0.78}{
    \begin{tabular}{cc@{\hspace{0.48em}}ccc@{\hspace{0.48em}}c@{\hspace{0.48em}}c@{\hspace{0.48em}}c@{\hspace{0.48em}}cc@{\hspace{0.48em}}c@{\hspace{0.48em}}c@{\hspace{0.48em}}c@{\hspace{0.48em}}c}
        \toprule
        \multirow{2}{*}[-0.3em]{\makecell{Prompt ranges \\ (remainder is masked)}} & \multicolumn{2}{c}{SSD-LM} & $\text{GPT-2}_{\text{\texttt{S}}}$ & \multicolumn{5}{c}{$\text{SEDD}_{\text{\texttt{M}}}$} & \multicolumn{5}{c}{DCD (ours)} \\
        \cmidrule(lr){2-3}
        \cmidrule(lr){4-4}
        \cmidrule(lr){5-9}
        \cmidrule(lr){10-14}
         & 100 & 500 & N/A & 2 & 4 & 8 & 16 & 32 & 2 & 4 & 8 & 16 & 32 \\
        \cmidrule(lr){1-1}
        \cmidrule(lr){2-14}
        $\scalebox{0.82}{[0.1,0.2] \& [0.5,0.7]}$ & 0.057 & 0.083 & 0.079 & 0.013 & 0.051 & 0.122 & 0.152 & 0.201 & 0.158 & 0.187 & 0.185 & 0.195 & 0.211 \\
        $\scalebox{0.82}{[0.25,0.75]}$ & 0.072 & 0.108 & 0.188 & 0.027 & 0.110 & 0.226 & 0.237 & 0.278 & 0.249 & 0.251 & 0.257 & 0.314 & 0.298 \\
        $\scalebox{0.82}{[0.0,0.1] \& [0.4,0.6] \& [0.9,1.0]}$ & 0.333 & 0.681 & 0.928 & 0.827 & 0.940 & 0.972 & 0.980 & 0.979 & 0.962 & 0.976 & 0.979 & 0.982 & 0.983 \\
        $\scalebox{0.82}{[0.4,0.5] \& [0.8,1.0]}$ & 0.436 & 0.565 & 0.914 & 0.896 & 0.944 & 0.978 & 0.978 & 0.980 & 0.963 & 0.975 & 0.975 & 0.976 & 0.981 \\
        $\scalebox{0.82}{[0.2,0.3] \& [0.6,0.8]}$ & 0.041 & 0.054 & 0.069 & 0.016 & 0.056 & 0.128 & 0.207 & 0.215 & 0.171 & 0.178 & 0.215 & 0.217 & 0.403 \\
        \bottomrule
    \end{tabular}}
    \vspace{-0.4em}
\end{table}


\subsection{Antibody Sequence Infilling}
\label{sec:exp-protein-infill}

We consider the task of unguided antibody infilling, where certain complementarity determining regions (CDRs) of antibodies (\ie sequences of amino acids) are missing and to be generated by the model. We adopt NOC-D \citep{gruver2023protein}, which is a discrete diffusion model trained on 104K antibody sequences from the Observed Antibody Space dataset \citep{ruffolo2023fast}. We further train a GPT model on the same dataset as the copula model. See \cref{appx:exp-details-protein-infill} for training details.

We follow \citet{gruver2023protein} to select the same 10 antibody seed sequences from paired OAS \citep{olsen2022observed}. We consider two infilling tasks: (i) three CDRs $\{\text{HCDR1}, \text{HCDR2}, \text{HCDR3}\}$ are masked, and (ii) two CDRs $\{\text{HCDR1}, \text{LCDR1}\}$ are masked. We follow the original paper and run $64$ diffusion steps for NOS-D. For DCD (\ie combining NOS-D with the trained GPT model as the copula model), we use $4$ denoising steps. We measure the sequence recovery rate, \ie the accuracy of the infilled sequences given the ground truth sequence.

As shown in \cref{tab:protein-infilling}, by combining the univariate marginals from NOS-D and the dependencies captured by the GPT model, DCD can also perform well in antibody sequence infilling tasks.

\begin{figure}[t]
    \centering
    \begin{minipage}{0.48\columnwidth}
        \centering
        \includegraphics[width=0.96\linewidth]{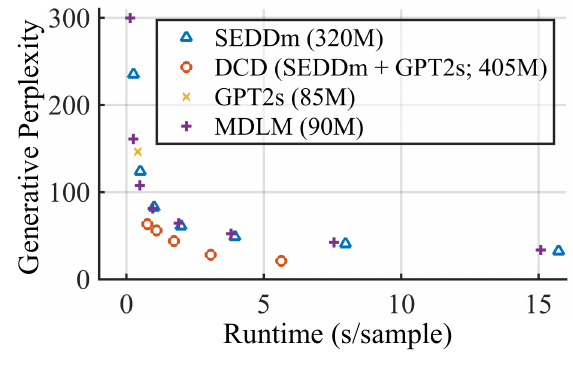}
        \vspace{-1.7em}
        \caption{Sampling time vs. generative perplexity (the autoregressive version of DCD is used).}
        \label{fig:runtime-performance}
    \end{minipage}\hfill
    \begin{minipage}{0.5\columnwidth}
        \caption{Antibody sequence infilling performance measured by sequence recovery rate ($\uparrow$). We compare DCD against its two base models in two tasks, where amino acids at different locations are masked. DCD outperforms both baselines with only 4 denoising steps.}
        \label{tab:protein-infilling}
        \vspace{0.1em}
    
        \renewcommand{\arraystretch}{0.9}
        \centering
        \scalebox{0.84}{
        \begin{tabular}{cccc}
            \toprule
            \multirow{2}{*}[-0.3em]{Method} & \multirow{2}{*}[-0.3em]{\# steps} & \multicolumn{2}{c}{Task} \\
            \cmidrule(lr){3-4}
            & & HCDR\{1+2+3\} & \{H+L\}CDR1 \\
            \midrule
            GPT & N/A & 57.21 & 90.28 \\
            NOS-D & 64 & 51.56 & 88.82 \\
            DCD & 4 & \textbf{58.28} & \textbf{91.58} \\
            \bottomrule
        \end{tabular}}
    \end{minipage}
    \vspace{-0.6em}
\end{figure}

\vspace{-0.4em}
\section{Related Work and Conclusion}
\label{sec:related-work-conclusion}
\vspace{-0.4em}

Diffusion models have been widely applied to model discrete data such as text and DNA sequences. Encouraged by the successes of continuous diffusion models (\eg \citet{ho2020denoising,song2020denoising}), initial attempts convert discrete data into continuous embeddings with either predefined or learned mappings. This enables the use of continuous diffusion models for discrete data \citep{chen2022analog,dieleman2022continuous,li2022diffusion,lovelace2023latent}. However, due to the need for ad-hoc mappings between the discrete data space and the continuous embedding space, which have to be pre-defined or pre-trained, continuous diffusion models are not as effective for modeling discrete distributions \citep{strudel2022self,li2022diffusion,dieleman2022continuous}.

\citet{austin2021structured} proposed the first diffusion model designed directly for discrete data. Later works further improved discrete diffusion models from various aspects such as better loss functions/learning objectives \citep{campbell2022continuous,meng2022concrete,lou2024discrete,benton2022denoising}, better model architectures \citep{sun2022score}, better sampling algorithms \citep{chen2023fast}, and unifying and scaling up existing techniques \citep{sahoo2024simple,shi2024simplified}.

Despite the recent breakthroughs of discrete diffusion models, few papers address the challenge of sampling in a few denoising steps. Some works attribute the failure to perform high-quality few-step generation to a scaling problem of the model. However, we show that the fundamental problem lies in the assumption made by discrete diffusion models that each variable is denoised independently at each step. In addition to identifying this problem, we propose a general solution Discrete Copula Diffusion that combines a discrete diffusion model with a copula model at inference time to obtain a better estimate of the denoising distribution at each step. Concurrently, \citet{guo2024plug} show that energy-based models can also be used as copula models to capture inter-variable dependencies.


There are a few limitations of DCD. First, in addition to a discrete diffusion model, it requires another copula model, which may require additional training for certain applications. Second, although the I-projected distribution is guaranteed as a better estimate of the target distribution, the I-projection step often needs to be approximated in practice. Finally, although DCD requires fewer denoising steps, the computation cost of each step is higher than in discrete diffusion models. Therefore, DCD may not always provide a notable speedup. However, DCD points out the inter-dependency modeling problem and opens up the possibility of combining different types of generative models for better overall performance.

\section*{Acknowledgements}

This work was funded in part by the DARPA ANSR program under award FA8750-23-2-0004, the DARPA CODORD program under award HR00112590089, the Deutsche Forschungsgemeinschaft (DFG, German Research Foundation) under Germany’s Excellence Strategy - EXC 2075 – 390740016, NSF grant \#IIS-1943641, and gifts from Adobe Research and Amazon. We acknowledge the support of the Stuttgart Center for Simulation Science (SimTech). MN thanks IMPRS-IS (International Max Planck Research School for Intelligent Systems) for the support.

\bibliography{iclr2025_conference}

\begin{thebibliography}{47}
\providecommand{\natexlab}[1]{#1}
\providecommand{\url}[1]{\texttt{#1}}
\expandafter\ifx\csname urlstyle\endcsname\relax
  \providecommand{\doi}[1]{doi: #1}\else
  \providecommand{\doi}{doi: \begingroup \urlstyle{rm}\Url}\fi

\bibitem[Allen-Zhu et~al.(2017)Allen-Zhu, Li, Oliveira, and Wigderson]{allen2017much}
Zeyuan Allen-Zhu, Yuanzhi Li, Rafael Oliveira, and Avi Wigderson.
\newblock Much faster algorithms for matrix scaling.
\newblock In \emph{2017 IEEE 58th Annual Symposium on Foundations of Computer Science (FOCS)}, pp.\  890--901. IEEE, 2017.

\bibitem[Austin et~al.(2021)Austin, Johnson, Ho, Tarlow, and Van Den~Berg]{austin2021structured}
Jacob Austin, Daniel~D Johnson, Jonathan Ho, Daniel Tarlow, and Rianne Van Den~Berg.
\newblock Structured denoising diffusion models in discrete state-spaces.
\newblock \emph{Advances in Neural Information Processing Systems}, 34:\penalty0 17981--17993, 2021.

\bibitem[Benton et~al.(2022)Benton, Shi, De~Bortoli, Deligiannidis, and Doucet]{benton2022denoising}
Joe Benton, Yuyang Shi, Valentin De~Bortoli, George Deligiannidis, and Arnaud Doucet.
\newblock From denoising diffusions to denoising markov models.
\newblock \emph{arXiv preprint arXiv:2211.03595}, 2022.

\bibitem[Campbell et~al.(2022)Campbell, Benton, De~Bortoli, Rainforth, Deligiannidis, and Doucet]{campbell2022continuous}
Andrew Campbell, Joe Benton, Valentin De~Bortoli, Tom Rainforth, George Deligiannidis, and Arnaud Doucet.
\newblock A continuous time framework for discrete denoising models.
\newblock In \emph{Proceedings of the 36th International Conference on Neural Information Processing Systems}, pp.\  28266--28279, 2022.

\bibitem[Chen et~al.(2024)Chen, Monso, Du, Simchowitz, Tedrake, and Sitzmann]{chen2024diffusion}
Boyuan Chen, Diego~Marti Monso, Yilun Du, Max Simchowitz, Russ Tedrake, and Vincent Sitzmann.
\newblock Diffusion forcing: Next-token prediction meets full-sequence diffusion.
\newblock \emph{arXiv preprint arXiv:2407.01392}, 2024.

\bibitem[Chen et~al.(2022)Chen, ZHANG, and Hinton]{chen2022analog}
Ting Chen, Ruixiang ZHANG, and Geoffrey Hinton.
\newblock Analog bits: Generating discrete data using diffusion models with self-conditioning.
\newblock In \emph{The Eleventh International Conference on Learning Representations}, 2022.

\bibitem[Chen et~al.(2023)Chen, Yuan, Li, Kou, Zhang, and Gu]{chen2023fast}
Zixiang Chen, Huizhuo Yuan, Yongqian Li, Yiwen Kou, Junkai Zhang, and Quanquan Gu.
\newblock Fast sampling via de-randomization for discrete diffusion models.
\newblock \emph{arXiv preprint arXiv:2312.09193}, 2023.

\bibitem[Dao \& Gu(2024)Dao and Gu]{dao2024transformers}
Tri Dao and Albert Gu.
\newblock Transformers are {SSM}s: Generalized models and efficient algorithms through structured state space duality.
\newblock In \emph{International Conference on Machine Learning (ICML)}, 2024.

\bibitem[Devlin(2018)]{devlin2018bert}
Jacob Devlin.
\newblock Bert: Pre-training of deep bidirectional transformers for language understanding.
\newblock \emph{arXiv preprint arXiv:1810.04805}, 2018.

\bibitem[Dhariwal \& Nichol(2021)Dhariwal and Nichol]{dhariwal2021diffusion}
Prafulla Dhariwal and Alex Nichol.
\newblock Diffusion models beat gans on image synthesis.
\newblock In \emph{Proceedings of the 35th International Conference on Neural Information Processing Systems}, pp.\  8780--8794, 2021.

\bibitem[Dieleman et~al.(2022)Dieleman, Sartran, Roshannai, Savinov, Ganin, Richemond, Doucet, Strudel, Dyer, Durkan, et~al.]{dieleman2022continuous}
Sander Dieleman, Laurent Sartran, Arman Roshannai, Nikolay Savinov, Yaroslav Ganin, Pierre~H Richemond, Arnaud Doucet, Robin Strudel, Chris Dyer, Conor Durkan, et~al.
\newblock Continuous diffusion for categorical data.
\newblock \emph{arXiv preprint arXiv:2211.15089}, 2022.

\bibitem[Geenens(2020)]{geenens2020copula}
Gery Geenens.
\newblock Copula modeling for discrete random vectors.
\newblock \emph{Dependence Modeling}, 8\penalty0 (1):\penalty0 417--440, 2020.

\bibitem[Gokaslan \& Cohen(2019)Gokaslan and Cohen]{gokaslan2019openweb}
Aaron Gokaslan and Vanya Cohen.
\newblock Openwebtext corpus.
\newblock \url{http://Skylion007.github.io/OpenWebTextCorpus}, 2019.

\bibitem[Gruver et~al.(2023)Gruver, Stanton, Frey, Rudner, Hotzel, Lafrance-Vanasse, Rajpal, Cho, and Wilson]{gruver2023protein}
Nate Gruver, Samuel Stanton, Nathan Frey, Tim~GJ Rudner, Isidro Hotzel, Julien Lafrance-Vanasse, Arvind Rajpal, Kyunghyun Cho, and Andrew~Gordon Wilson.
\newblock Protein design with guided discrete diffusion.
\newblock In \emph{Proceedings of the 37th International Conference on Neural Information Processing Systems}, pp.\  12489--12517, 2023.

\bibitem[Guo et~al.(2024)Guo, Zhu, Tao, and Chen]{guo2024plug}
Wei Guo, Yuchen Zhu, Molei Tao, and Yongxin Chen.
\newblock Plug-and-play controllable generation for discrete masked models.
\newblock \emph{arXiv preprint arXiv:2410.02143}, 2024.

\bibitem[Han et~al.(2023)Han, Kumar, and Tsvetkov]{han2023ssd}
Xiaochuang Han, Sachin Kumar, and Yulia Tsvetkov.
\newblock Ssd-lm: Semi-autoregressive simplex-based diffusion language model for text generation and modular control.
\newblock In \emph{Proceedings of the 61st Annual Meeting of the Association for Computational Linguistics (Volume 1: Long Papers)}, pp.\  11575--11596, 2023.

\bibitem[Ho et~al.(2020)Ho, Jain, and Abbeel]{ho2020denoising}
Jonathan Ho, Ajay Jain, and Pieter Abbeel.
\newblock Denoising diffusion probabilistic models.
\newblock \emph{Advances in neural information processing systems}, 33:\penalty0 6840--6851, 2020.

\bibitem[Huang et~al.(2023)Huang, Sun, Du, and Lv]{huang2023conditional}
Han Huang, Leilei Sun, Bowen Du, and Weifeng Lv.
\newblock Conditional diffusion based on discrete graph structures for molecular graph generation.
\newblock In \emph{Proceedings of the AAAI Conference on Artificial Intelligence}, volume~37, pp.\  4302--4311, 2023.

\bibitem[Idel(2016)]{idel2016review}
Martin Idel.
\newblock A review of matrix scaling and sinkhorn's normal form for matrices and positive maps.
\newblock \emph{arXiv preprint arXiv:1609.06349}, 2016.

\bibitem[Kalantari \& Khachiyan(1993)Kalantari and Khachiyan]{kalantari1993rate}
Bahman Kalantari and Leonid Khachiyan.
\newblock On the rate of convergence of deterministic and randomized ras matrix scaling algorithms.
\newblock \emph{Operations research letters}, 14\penalty0 (5):\penalty0 237--244, 1993.

\bibitem[Li et~al.(2022)Li, Thickstun, Gulrajani, Liang, and Hashimoto]{li2022diffusion}
Xiang~Lisa Li, John Thickstun, Ishaan Gulrajani, Percy Liang, and Tatsunori Hashimoto.
\newblock Diffusion-lm improves controllable text generation.
\newblock In \emph{Advances in Neural Information Processing Systems}, 2022.

\bibitem[Lou et~al.(2024)Lou, Meng, and Ermon]{lou2024discrete}
Aaron Lou, Chenlin Meng, and Stefano Ermon.
\newblock Discrete diffusion modeling by estimating the ratios of the data distribution.
\newblock \emph{arXiv preprint arXiv:2310.16834}, 2024.

\bibitem[Lovelace et~al.(2023)Lovelace, Kishore, Wan, Shekhtman, and Weinberger]{lovelace2023latent}
Justin Lovelace, Varsha Kishore, Chao Wan, Eliot Shekhtman, and Kilian~Q Weinberger.
\newblock Latent diffusion for language generation.
\newblock In \emph{Proceedings of the 37th International Conference on Neural Information Processing Systems}, pp.\  56998--57025, 2023.

\bibitem[Meng et~al.(2022)Meng, Choi, Song, and Ermon]{meng2022concrete}
Chenlin Meng, Kristy Choi, Jiaming Song, and Stefano Ermon.
\newblock Concrete score matching: Generalized score matching for discrete data.
\newblock \emph{Advances in Neural Information Processing Systems}, 35:\penalty0 34532--34545, 2022.

\bibitem[Merity et~al.(2022)Merity, Xiong, Bradbury, and Socher]{merity2022pointer}
Stephen Merity, Caiming Xiong, James Bradbury, and Richard Socher.
\newblock Pointer sentinel mixture models.
\newblock In \emph{International Conference on Learning Representations}, 2022.

\bibitem[Morehead et~al.(2023)Morehead, Ruffolo, Bhatnagar, and Madani]{morehead2023towards}
Alex Morehead, Jeffrey Ruffolo, Aadyot Bhatnagar, and Ali Madani.
\newblock Towards joint sequence-structure generation of nucleic acid and protein complexes with se (3)-discrete diffusion.
\newblock \emph{arXiv preprint arXiv:2401.06151}, 2023.

\bibitem[Nelsen(2006)]{nelsen2006introduction}
Roger~B Nelsen.
\newblock An introduction to copulas, 2006.

\bibitem[Olsen et~al.(2022)Olsen, Boyles, and Deane]{olsen2022observed}
Tobias~H Olsen, Fergus Boyles, and Charlotte~M Deane.
\newblock Observed antibody space: A diverse database of cleaned, annotated, and translated unpaired and paired antibody sequences.
\newblock \emph{Protein Science}, 31\penalty0 (1):\penalty0 141--146, 2022.

\bibitem[Pillutla et~al.(2021)Pillutla, Swayamdipta, Zellers, Thickstun, Welleck, Choi, and Harchaoui]{pillutla2021mauve}
Krishna Pillutla, Swabha Swayamdipta, Rowan Zellers, John Thickstun, Sean Welleck, Yejin Choi, and Zaid Harchaoui.
\newblock Mauve: measuring the gap between neural text and human text using divergence frontiers.
\newblock In \emph{Proceedings of the 35th International Conference on Neural Information Processing Systems}, pp.\  4816--4828, 2021.

\bibitem[Pope et~al.(2023)Pope, Douglas, Chowdhery, Devlin, Bradbury, Heek, Xiao, Agrawal, and Dean]{pope2023efficiently}
Reiner Pope, Sholto Douglas, Aakanksha Chowdhery, Jacob Devlin, James Bradbury, Jonathan Heek, Kefan Xiao, Shivani Agrawal, and Jeff Dean.
\newblock Efficiently scaling transformer inference.
\newblock \emph{Proceedings of Machine Learning and Systems}, 5:\penalty0 606--624, 2023.

\bibitem[Radford et~al.(2019)Radford, Wu, Child, Luan, Amodei, Sutskever, et~al.]{radford2019language}
Alec Radford, Jeffrey Wu, Rewon Child, David Luan, Dario Amodei, Ilya Sutskever, et~al.
\newblock Language models are unsupervised multitask learners.
\newblock \emph{OpenAI blog}, 1\penalty0 (8):\penalty0 9, 2019.

\bibitem[Rudas(2018)]{rudas2018lectures}
Tam{\'a}s Rudas.
\newblock \emph{Lectures on categorical data analysis}.
\newblock Springer, 2018.

\bibitem[Ruffolo et~al.(2023)Ruffolo, Chu, Mahajan, and Gray]{ruffolo2023fast}
Jeffrey~A Ruffolo, Lee-Shin Chu, Sai~Pooja Mahajan, and Jeffrey~J Gray.
\newblock Fast, accurate antibody structure prediction from deep learning on massive set of natural antibodies.
\newblock \emph{Nature communications}, 14\penalty0 (1):\penalty0 2389, 2023.

\bibitem[Ruschendorf(1995)]{ruschendorf1995convergence}
Ludger Ruschendorf.
\newblock Convergence of the iterative proportional fitting procedure.
\newblock \emph{The Annals of Statistics}, pp.\  1160--1174, 1995.

\bibitem[Sahoo et~al.(2024)Sahoo, Arriola, Schiff, Gokaslan, Marroquin, Chiu, Rush, and Kuleshov]{sahoo2024simple}
Subham~Sekhar Sahoo, Marianne Arriola, Yair Schiff, Aaron Gokaslan, Edgar Marroquin, Justin~T Chiu, Alexander Rush, and Volodymyr Kuleshov.
\newblock Simple and effective masked diffusion language models.
\newblock \emph{arXiv preprint arXiv:2406.07524}, 2024.

\bibitem[Shi et~al.(2024)Shi, Han, Wang, Doucet, and Titsias]{shi2024simplified}
Jiaxin Shi, Kehang Han, Zhe Wang, Arnaud Doucet, and Michalis~K. Titsias.
\newblock Simplified and generalized masked diffusion for discrete data.
\newblock In \emph{Advances in Neural Information Processing Systems}, 2024.

\bibitem[Sinkhorn(1964)]{sinkhorn1964relationship}
Richard Sinkhorn.
\newblock A relationship between arbitrary positive matrices and doubly stochastic matrices.
\newblock \emph{The annals of mathematical statistics}, 35\penalty0 (2):\penalty0 876--879, 1964.

\bibitem[Sklar(1959)]{sklar1959fonctions}
M~Sklar.
\newblock Fonctions de r{\'e}partition {\`a} n dimensions et leurs marges.
\newblock In \emph{Annales de l'ISUP}, volume~8, pp.\  229--231, 1959.

\bibitem[Sohl-Dickstein et~al.(2015)Sohl-Dickstein, Weiss, Maheswaranathan, and Ganguli]{sohl2015deep}
Jascha Sohl-Dickstein, Eric Weiss, Niru Maheswaranathan, and Surya Ganguli.
\newblock Deep unsupervised learning using nonequilibrium thermodynamics.
\newblock In \emph{International conference on machine learning}, pp.\  2256--2265. PMLR, 2015.

\bibitem[Song et~al.(2020)Song, Meng, and Ermon]{song2020denoising}
Jiaming Song, Chenlin Meng, and Stefano Ermon.
\newblock Denoising diffusion implicit models.
\newblock In \emph{International Conference on Learning Representations}, 2020.

\bibitem[Song et~al.(2023)Song, Dhariwal, Chen, and Sutskever]{song2023consistency}
Yang Song, Prafulla Dhariwal, Mark Chen, and Ilya Sutskever.
\newblock Consistency models.
\newblock In \emph{International Conference on Machine Learning}, pp.\  32211--32252. PMLR, 2023.

\bibitem[Strudel et~al.(2022)Strudel, Tallec, Altch{\'e}, Du, Ganin, Mensch, Grathwohl, Savinov, Dieleman, Sifre, et~al.]{strudel2022self}
Robin Strudel, Corentin Tallec, Florent Altch{\'e}, Yilun Du, Yaroslav Ganin, Arthur Mensch, Will Grathwohl, Nikolay Savinov, Sander Dieleman, Laurent Sifre, et~al.
\newblock Self-conditioned embedding diffusion for text generation.
\newblock \emph{arXiv preprint arXiv:2211.04236}, 2022.

\bibitem[Sun et~al.(2022)Sun, Yu, Dai, Schuurmans, and Dai]{sun2022score}
Haoran Sun, Lijun Yu, Bo~Dai, Dale Schuurmans, and Hanjun Dai.
\newblock Score-based continuous-time discrete diffusion models.
\newblock In \emph{The Eleventh International Conference on Learning Representations}, 2022.

\bibitem[Vignac et~al.(2022)Vignac, Krawczuk, Siraudin, Wang, Cevher, and Frossard]{vignac2022digress}
Clement Vignac, Igor Krawczuk, Antoine Siraudin, Bohan Wang, Volkan Cevher, and Pascal Frossard.
\newblock Digress: Discrete denoising diffusion for graph generation.
\newblock In \emph{The Eleventh International Conference on Learning Representations}, 2022.

\bibitem[Wu et~al.(2023)Wu, Fan, Liu, Zheng, Gong, Jiao, Li, Guo, Duan, Chen, et~al.]{wu2023ar}
Tong Wu, Zhihao Fan, Xiao Liu, Hai-Tao Zheng, Yeyun Gong, Jian Jiao, Juntao Li, Jian Guo, Nan Duan, Weizhu Chen, et~al.
\newblock Ar-diffusion: Auto-regressive diffusion model for text generation.
\newblock In \emph{Thirty-seventh Conference on Neural Information Processing Systems}, 2023.

\bibitem[Zhao et~al.(2024)Zhao, Ding, Yu, and Akoglu]{zhao2024improving}
Lingxiao Zhao, Xueying Ding, Lijun Yu, and Leman Akoglu.
\newblock Improving and unifying discrete\&continuous-time discrete denoising diffusion.
\newblock \emph{arXiv preprint arXiv:2402.03701}, 2024.

\bibitem[Zhou et~al.(2024)Zhou, Zheng, Wang, Yin, and Huang]{zhou2024score}
Mingyuan Zhou, Huangjie Zheng, Zhendong Wang, Mingzhang Yin, and Hai Huang.
\newblock Score identity distillation: Exponentially fast distillation of pretrained diffusion models for one-step generation.
\newblock In \emph{International Conference on Machine Learning}, 2024.

\end{thebibliography}
\bibliographystyle{iclr2025_conference}

\newpage

\appendix
\section{Proof of the Theoretical Results}
\label{appx:proofs}

\begin{proof}[Proof of \cref{prop:elbo-decomp}]
    Following \citep{ho2020denoising,sohl2015deep}, the negative ELBO $\calL$ can be decomposed as follows:
        \begin{align*}
            \calL & = \expectation_{\q} \left [ -\log \p(\x_T) - \sum_{t=1}^{T} \log \frac{\p_{\theta} (\x_{t-1} \given \x_{t})}{q(\x_{t} \given \x_{t-1})} \right ], \\
            & = \expectation_{\q} \left [ -\log \p(\x_T) - \sum_{t=1}^{T} \log \frac{\p_{\theta} (\x_{t-1} \given \x_{t}) \cdot \q(\x_{t-1})}{q(\x_{t-1} \given \x_{t}) \cdot \q(\x_{t})} \right ], \\
            & = \expectation_{\q} \left [ -\log \frac{\p(\x_T)}{\q(\x_T)} - \sum_{t=1}^{T} \log \frac{\p_{\theta} (\x_{t-1} \given \x_{t})}{q(\x_{t-1} \given \x_{t})} - \log \p(\x_0) \right ], \\
            & = \kld (\q(\x_{T}) \;\|\; \p(\x_{T})) + \expectation_{q} \left [ \sum_{t=1}^{T} \kld (q(\x_{t-1} \given \x_{t}) \;\|\; \p_{\theta} (\x_{t-1} \given \x_{t})) \right ] + \mathrm{H} (\x_0). \numberthis \label{eq:proof-1-1}
        \end{align*}
    The first term equals $0$ as we assume the noise distribution $\p(\X_{T})$ is consistent in the noising and the denoising processes. Given the independent denoising assumption, when the denoising distribution are optimal, we have
        \begin{align*}
            \forall t \in \{1, \dots, T\}, \; \p_{\theta} (\x_{t-1} \given \x_{t}) = \prod_{i} \q(x_{t-1}^{i} \given \x_{t}).
        \end{align*}
    Plug in \cref{eq:proof-1-1} and using the definition of total correlation, we have:
        \begin{align*}
            \calL & = \kld (\q(\x_{T}) \;\|\; \p(\x_{T})) + \expectation_{q} \left [ \sum_{t=1}^{T} \kld (q(\x_{t-1} \given \x_{t}) \;\|\; \prod_{i} \q(x_{t-1}^{i} \given \x_{t})) \right ] + \mathrm{H} (\x_0) \\
            & = \kld (\q(\x_{T}) \;\|\; \p(\x_{T})) + \sum_{t=1}^{T} \tc (q (\X_{t-1} \given \X_{t})) + \mathrm{H} (\p(\X_0)) \\
            & \geq \mathrm{H} (\p(\X_0)) + \sum_{t=1}^{T} \tc (q (\X_{t-1} \given \X_{t})).
        \end{align*}
\end{proof}

\begin{proof}[Proof of \cref{prop:iproj-is-good}]
    According to Pythagoras' triangle-inequality theorem, if $\hat{\p}$ is the I-projection of $\p_{\mathrm{est}}$ onto $\calP_{\mathrm{mar}}^{\p_{\mathrm{tar}}}$, and $\calP_{\mathrm{mar}}^{\p_{\mathrm{tar}}}$ is convex (this can be shown by applying the definition of a convex set), the following holds for any $\p' \!\in\! \calP_{\mathrm{mar}}^{\p_{\mathrm{tar}}}$:
        \begin{align}
            \kld (\p' \;\|\; \p_{\mathrm{est}}) \geq \kld (\p' \;\|\; \hat{\p}) + \kld (\hat{\p} \;\|\; \p_{\mathrm{est}}). \label{eq:proof-3-1}
        \end{align}
    Choosing $\p' \!=\! \p_{\mathrm{tar}}$, we have
        \begin{align*}
            \kld (\p_{\mathrm{tar}} \;\|\; \hat{\p}) \leq \kld (\p_{\mathrm{tar}} \;\|\; \p_{\mathrm{est}}) - \kld (\hat{\p} \;\|\; \p_{\mathrm{est}}) < \kld (\p_{\mathrm{tar}} \;\|\; \p_{\mathrm{est}}),
        \end{align*}
    \noindent where the last inequality holds since $\kld (\hat{\p} \;\|\; \p_{\mathrm{est}}) \!>\! 0$ if the set of univariate marginals of $\p_{\mathrm{est}}$ and $\p_{\mathrm{tar}}$ are different (as assumed in the proposition).
    
\end{proof}

\begin{proof}[Proof of \cref{prop:phat-form}]
    Following the definition of $\hat{\p}$, we write down the constrained optimization problem as follows
        \begin{align*}
            & \minimize_{\p'}\; \kld (\p' \;\|\; \p_{\mathrm{est}}) \\
            \text{s.t.~} \forall i \in \{1,\dots,N\}&, x_{i} \in \{1, \dots, C\}, \; \sum_{\x_{\backslash i}} \p'(\x_{\backslash i}, x_{i}) = \p_{\mathrm{tar}}(x_{i}).
        \end{align*}
    To incorporate the constraints, we use the method of Lagrange multipliers. The Lagrangian for this problem is
        \begin{align*}
            \calL(\p', \{\lambda_{i}\}_{i=1}^{N}) = \sum_{\x} \p' (\x) \log \frac{\p' (\x)}{\p_{\mathrm{est}} (\x)} + \sum_{i=1}^{N} \sum_{x_i=1}^{C} \lambda_{i} (x_i) \cdot \left ( \sum_{\x_{\backslash i}} \p' (\x_{\backslash i}, x_{i}) - \p_{\mathrm{tar}} (x_i) \right ),
        \end{align*}
    \noindent where the Lagrange multipliers $\{\lambda_{i}\}_{i=1}^{N}$ enforce the univariate marginal constraints.

    To minimize the Lagrangian with respect to $\p' (\x)$, we take the partial derivative of $\calL (\p', \{\lambda_{i}\}_{i=1}^{N})$ with respect to $\p' (\x)$ and set it to $0$:
        \begin{align*}
            \frac{\partial \calL (\p', \{\lambda_{i}\}_{i=1}^{N})}{\partial \p' (\x)} = \log \frac{\p' (\x)}{\p_{\mathrm{est}} (\x)} + 1 + \sum_{i} \lambda_{i} (x_i) = 0.
        \end{align*}
    Simplifying this equation gives
        \begin{align*}
            \p'(\x) = \p_{\mathrm{est}} (\x) \cdot \exp \left ( - 1 - \sum_{i} \lambda_{i} (x_i) \right ).
        \end{align*}
    Defining $\sigma_{i} (x_i) \!:=\! \exp ( - \lambda_{i} (x_i) - 1/N )$ gives $\p'(\x) = \p_{\mathrm{est}} (\x) \prod_{i} \sigma_{i} (x_i)$.

    Existence of the solution follows from the fact that (i) the objective function is convex and bounded (since probability values are in $[0,1]$), and (ii) the set of constraints is feasible (\eg $\p' (\x) \!=\! \prod_{i} \p_{\mathrm{tar}} (x_{i})$ or $\p' (\x) \!=\! \p_{\mathrm{tar}} (\x)$).

\end{proof}

\begin{proof}[Proof of \cref{thm:copula-obj}]
    We show that for any $\V^{*}$ that minimizes the objective function $\calL (\V; \p_{\mathrm{tar}}, \p_{\mathrm{est}})$, the corresponding $\p'$ defined by $\p' (\x) \!=\! \p_{\mathrm{est}} (\x) \cdot \prod_{i} \exp(\V [i, x_i])$ belongs to the set $\calP_{\mathrm{mar}}^{\p_{\mathrm{tar}}}$. Specifically, for any $\V$ that minimizes the objective, the partial derivative of $\calL (\V; \p_{\mathrm{tar}}, \p_{\mathrm{est}})$ with respect to any $\V[i, x_i]$ should be $0$:
        \begin{align*}
            \frac{\partial \calL (\V; \p_{\mathrm{tar}}, \p_{\mathrm{est}})}{\partial \V [i, x_i]} = \exp(\V [i, x_i]) \sum_{\x_{\backslash i}} \p_{\mathrm{est}} (\x_{\backslash i}, x_i) \prod_{j \neq i} \exp(\V [j, x_j]) - \p_{\mathrm{tar}} (x_i) = 0.
        \end{align*}
    Plug in the definition of $\p'$, we have
        \begin{align}
            0 = \sum_{\x_{\backslash i}} \p' (\x_{\backslash i}, x_i) - \p_{\mathrm{tar}} (x_i) = \p' (x_i) - \p_{\mathrm{tar}} (x_i). \label{eq:proof-4-1}
        \end{align}
    Since \cref{eq:proof-4-1} holds for all $(i, x_i)$ pairs, we have that every minimizer of $\calL (\V; \p_{\mathrm{tar}}, \p_{\mathrm{est}})$ corresponds to a distribution $\p'$ in $\calP_{\mathrm{mar}}^{\p_{\mathrm{tar}}}$. Since $\calL (\V; \p_{\mathrm{tar}}, \p_{\mathrm{est}})$ is convex, we can also argue the converse: if a distribution $\p'$ with the above-defined form belongs to $\calP_{\mathrm{mar}}^{\p_{\mathrm{tar}}}$, then the corresponding $\V$ is a minimizer of $\calL (\V; \p_{\mathrm{tar}}, \p_{\mathrm{est}})$.

    According to \cref{prop:phat-form}, the solution to the following I-projection exists and its solution $\hat{\p}$ has the same form as $\p'$.
        \begin{align*}
            \hat{\p} = \argmin_{\p' \in \calP_{\mathrm{mar}}^{\p}} \kld (\p' \;\|\; \p_{\mathrm{est}}).
        \end{align*}
    Since $\hat{\p}$ has the same form as $\p'$ (by Prop.~\ref{prop:phat-form}) and belongs to $\calP_{\mathrm{mar}}^{\p_{\mathrm{tar}}}$, it is the a minimizer of $\calL (\V; \p_{\mathrm{tar}}, \p_{\mathrm{est}})$.
\end{proof}

\begin{proof}[Proof of \cref{prop:copula_is_invariant}]
The copula of $p$ is shown to be invariant under rescalings of the form $q(\bm{x})\propto\p (\x) \cdot \prod_{i} \exp (\V [i, x_i])$ for any $\V\in \mathbb{R}^{N\times C}$ by using the parameterization of a discrete copula by conditional odds ratios (\cref{def:conditional_odds_ratios}). The scaling factors cancel in the ratios as shown, e.g. by \citet[Theorem 12.3]{rudas2018lectures}.
\end{proof}

\begin{proof}[Proof of \cref{prop:mask-can-be-separated}]
    We start by writing the probability $\q (\x_{t} \given \x_{t+1})$ using the Bayes' rule:
        \begin{align*}
            \q (\x_{t} \given \x_{t+1}) & = \q (\x_{t+1} \given \x_{t}) \cdot \frac{\q(\x_{t})}{\q(\x_{t+1})}, \\
            & = \sum_{\x_{0}} \frac{1}{\q (\x_{t+1})} \cdot \q (\x_{t+1} \given \x_{t}) \cdot \q(\x_{t} \given \x_{0}) \cdot \p (\x_{0}), \numberthis \label{eq:proof-5-1}
        \end{align*}
    \noindent where the last equality follows from $\q (\x_{t}) \!=\! \sum_{\x_{0}} \q (\x_{t} \given \x_{0}) \!\cdot\! \p (\x_{0})$. Recall from the proposition that $I$ is defined as the set of variables $i$ such that $x_{t+1}^{i} \!=\! \text{\texttt{<MASK>}}$ and $J$ is the complement of $I$.

    First, we must have $x_{t}^{j} \!=\! x_{t+1}^{j}$ for $j \!\in\! J$ since for any other value of $X_{t}^{j}$, we have $\q(\x_{t+1} \given \x_{t}) \!=\! 0$ in \cref{eq:proof-5-1}. As a result, $\q(\x_{t} \given \x_{t+1})$ is also zero.

    We then move our attention to the variables in $I$. We first consider the probability $\q(X_{t}^{i} \!=\! \text{\texttt{<MASK>}} \given \x_{t+1})$ for any $i \!\in\! I$. Following \cref{eq:proof-5-1}, we have
        \begin{align*}
            \q (X_{t}^{i} = \text{\texttt{<MASK>}} \given \x_{t+1}) & = \sum_{\x_0} \sum_{\x_{t}^{\backslash i}} \frac{1}{\q (\x_{t+1})} \cdot \q (\x_{t+1} \given \x_{t}) \cdot \q(\x_{t} \given \x_{0}) \cdot \p (\x_{0}), \\
            & = \sum_{\x_{t}^{\backslash i}} \frac{1}{\q (\x_{t+1})} \cdot \q (\x_{t+1} \given \x_{t}) \cdot \q(\x_{t}), \\
            & = \frac{\q (X_{t+1}^{i} = \text{\texttt{<MASK>}} \given X_{t}^{i} = \text{\texttt{<MASK>}}) \cdot \q (X_{t}^{i} = \text{\texttt{<MASK>}})}{\q (X_{t+1}^{i} = \text{\texttt{<MASK>}})}, \\
            & = \frac{\q (X_{t}^{i} = \text{\texttt{<MASK>}})}{\q (X_{t+1}^{i} = \text{\texttt{<MASK>}})} = \frac{\alpha_{t}}{\alpha_{t+1}}. \numberthis \label{eq:proof-5-2}
        \end{align*}
    We then focus on $\X_{t}^{I} \!=\! \x_{t}^{I}$, where none of the value in $\x_{t}^{I}$ is \texttt{<MASK>}. Note that we also need to have $\X_{t}^{J} \!=\! \x_{t+1}^{J}$. 
        \begin{align*}
            \q (\x_{t} \given \x_{t+1}) & \propto \sum_{\x_{0}} \q (\x_{t+1} \given \x_{t}) \cdot \q(\x_{t} \given \x_{0}) \cdot \q (\x_{0}), \\
            & \overset{(a)}{=} \q (\x_{t+1} \given \x_{t}) \cdot \q(\X_{0} = \x_{t}), \\
            & = \left ( \frac{\alpha_{t+1} - \alpha_{t}}{1 - \alpha_{t}} \right )^{\abs{I}} \cdot \q(\X_{0} = \x_{t}), \\
            & \propto \q(\X_{0} = \x_{t}),  \numberthis \label{eq:proof-5-3}
        \end{align*}
    \noindent where $\p (\X_{0})$ is the data distribution; $(a)$ follows from the fact that no value in $\x_{t}$ is \texttt{<MASK>}, hence $\x_{0} \!=\! \x_{t}$; $\frac{\alpha_{t+1} - \alpha_{t}}{1 - \alpha_{t}}$ is the probability of transitioning into the mask state from time $t$ to time $t \!+\! 1$.

    Denote $\tilde{\X}_{t}$ as a set of variables with the same configuration and semantics as $\X_{t}$, with the only difference that the category \texttt{<MASK>} is excluded. By following \cref{eq:proof-5-3} and apply normalization, we conclude that
        \begin{align}
            \q (\tilde{\x}_{t} \given \x_{t+1}) = \p(\X_{0}^{I} = \tilde{\x}_{t}^{I} \given \X_{0}^{J} = \x_{t+1}^{J}) \cdot \mathbbm{1} [\tilde{\x}_{t}^{J} = \x_{t+1}^{J}]. \label{eq:proof-5-3b}
        \end{align}
    This matches the definition in \cref{eq:aux-denoising-dist}. 

    Finally, we verify the correctness of the distribution $\q (\X_{t} \given \tilde{\x}_{t}, \x_{t+1})$ defined in the proposition by verifying the following for any $\x_{t}$
        \begin{align}
            \q (\x_{t} \given \x_{t+1}) = \sum_{\tilde{\x}_{t}} \q (\tilde{\x}_{t} \given \x_{t+1}) \cdot \q (\x_{t} \given \tilde{\x}_{t}, \x_{t+1}). \label{eq:proof-5-4}
        \end{align}
    Denote $K$ as the set of variables $i$ such that $\x_{t} \!=\! \text{\texttt{<MASK>}}$ and $L$ as its complement. First, if $L \subseteq J$ (\ie $I \subseteq K$), then both the left-hand side (LHS) and the right-hand sides (RHS) are zero. Specifically, the RHS is zero since according to the definition, $\forall i \!\in\! J \,\&\, i \!\in\! K$, we have $\q (x_{t}^{i} \given \tilde{x}_{t}^{i}, x_{t+1}^{i}) \!=\! 0$.

    Next, if $K \subseteq I$, we can decompose $\q (\x_{t} \given \x_{t+1})$ as follows
        \begin{align}
            \q (\x_{t} \given \x_{t+1}) = \q (\x_{t}^{I \backslash K} \given \x_{t+1}) \cdot \prod_{i \in K} \q (x_{t}^{i} \given \x_{t+1}) \cdot \prod_{j \in J} \q (x_{t}^{j} \given \x_{t+1}). \label{eq:proof-5-5}
        \end{align}
    For any $j \!\in\! J$, if $x_{t}^{j} \!\neq\! x_{t+1}^{j}$ then both the LHS and the RHS of \cref{eq:proof-5-4} are zero. Otherwise we always have $\q (x_{t}^{j} \given \x_{t+1}) \!=\! 1$. Therefore, \cref{eq:proof-5-5} can be further simplified as
        \begin{align}
            \q (\x_{t} \given \x_{t+1}) = \q (\x_{t}^{I \backslash K} \given \x_{t+1}) \cdot \prod_{i \in K} \q (x_{t}^{i} \given \x_{t+1}). \label{eq:proof-5-6}
        \end{align}
    We then proceed to simplify the RHS of \cref{eq:proof-5-4}:
        \begin{align*}
            & \sum_{\tilde{\x}_{t}} \q (\tilde{\x}_{t} \given \x_{t+1}) \cdot \q (\x_{t} \given \tilde{\x}_{t}, \x_{t+1}), \\
            = & \sum_{\tilde{\x}_{t}^{K}} \q (\tilde{\x}_{t}^{K}, \tilde{\x}_{t}^{I \backslash K} \given \x_{t+1}) \cdot \left ( \frac{\alpha_{t}}{\alpha_{t+1}} \right )^{\abs{K}} \cdot \left ( \frac{\alpha_{t+1} - \alpha_{t}}{\alpha_{t+1}} \right )^{\abs{I} - \abs{K}}, \\
            \overset{(a)}{=} & \sum_{\tilde{\x}_{t}^{K}} \q (\tilde{\x}_{t}^{K}, \tilde{\x}_{t}^{I \backslash K} \given \x_{t+1}) \cdot \left ( \frac{\alpha_{t+1} - \alpha_{t}}{\alpha_{t+1}} \right )^{\abs{I} - \abs{K}} \cdot \prod_{i \in K} \q (x_{t}^{i} \given \x_{t+1}), \\
            = & \, \q (\tilde{\x}_{t}^{I \backslash K} \given \x_{t+1}) \cdot \left ( \frac{\alpha_{t+1} - \alpha_{t}}{\alpha_{t+1}} \right )^{\abs{I} - \abs{K}} \cdot \prod_{i \in K} \q (x_{t}^{i} \given \x_{t+1}), \\
            \overset{(b)}{\propto} & \, \p (\X_{0}^{I \backslash K} = \tilde{\x}_{t}^{I \backslash K}, \X_{0}^{J} = \tilde{\x}_{t}^{J}) \cdot \prod_{i \in K} \q (x_{t}^{i} \given \x_{t+1}), \\
            \overset{(c)}{\propto} & \, \q (\X_{t}^{I \backslash K} = \tilde{\x}_{t}^{I \backslash K} \given \x_{t+1}) \cdot \prod_{i \in K} \q (x_{t}^{i} \given \x_{t+1}), \numberthis \label{eq:proof-5-7}
        \end{align*}
    \noindent where $(a)$ follows from \cref{eq:proof-5-2}, $(b)$ applies the definition in \cref{eq:proof-5-3b}, and $(c)$ is a result of applying \cref{eq:proof-5-3} to the case where $\tilde{\x}_{t}^{L} \!=\! \{\tilde{\x}_{t}^{I \backslash K}, \tilde{\x}_{t}^{J}\}$ are not \texttt{<MASK>}.

    By combining \cref{eq:proof-5-6,eq:proof-5-7}, we conclude that the LHS and the RHS of \cref{eq:proof-5-4} are proportional to each other. Since they are both properly-normalized distributions, they must also match exactly.
    
\end{proof}

\begin{proof}[Proof of \cref{prop:aux-marginals-match}]
    We first state a more detailed version of the proposition: for each variable $i$ and data category $c$ ($c \!\neq\! \text{\texttt{<MASK>}}$), we have
        \begin{align*}
            \q (\tilde{X}_{t}^{i} = c \given \x_{t+1}) = \frac{1}{Z} \cdot \q (X_{t}^{i} = c \given \x_{t+1}), \text{~where~} Z = \sum_{c \neq \text{\texttt{<MASK>}}} \q (X_{t}^{i} = c \given \x_{t+1}).
        \end{align*}
    According to the proof of \cref{prop:mask-can-be-separated}, \cref{eq:proof-5-4} holds for all $\x_{t}$. Therefore, we have that for each $i$ and each data category $x_{t}^{i} \!\neq\! \text{\texttt{<MASK>}}$,
        \begin{align}
            \q (x_{t}^{i} \given \x_{t+1}) = \sum_{\tilde{\x}_{t}} \q (\tilde{\x}_{t} \given \x_{t+1}) \cdot \q (x_{t}^{i} \given \tilde{\x}_{t}, \x_{t+1}). \label{eq:proof-6-1}
        \end{align}
    If $i \!\in\! J$, then both the LHS of the above equation and $\q (x_{t}^{i} \given \tilde{\x}_{t}, \x_{t+1})$ equals one if and only if $x_{t}^{i} \!=\! x_{t+1}^{i}$. Therefore, the result holds trivially.

    Next, if $i \!\in\! I$, denote $I_{\backslash i} \!:=\! I \backslash \{i\}$, \cref{eq:proof-6-1} is simplified to 
        \begin{align*}
            \q (x_{t}^{i} \given \x_{t+1}) & = \sum_{\tilde{\x}_{t}} \q (\tilde{\x}_{t} \given \x_{t+1}) \cdot \q (x_{t}^{i} \given \tilde{\x}_{t}, \x_{t+1}), \\
            & = \sum_{\tilde{x}_{t}^{i}} \sum_{\tilde{\x}_{t}^{I_{\backslash i}}} \q (\tilde{x}_{t}^{i}, \tilde{\x}_{t}^{I_{\backslash i}} \given \x_{t+1}) \cdot \q (x_{t}^{i} \given \tilde{x}_{t}^{i}, x_{t+1}^{i}), \\
            & = \q(\tilde{X}_{t}^{i} = x_{t}^{i} \given \x_{t+1}) \cdot \q (x_{t}^{i} \given \tilde{X}_{t}^{i} = x_{t}^{i}, x_{t+1}^{i}), \\
            & = \q(\tilde{X}_{t}^{i} = x_{t}^{i} \given \x_{t+1}) \cdot \frac{\alpha_{t+1} - \alpha_{t}}{\alpha_{t+1}}.
        \end{align*}
    Therefore, we have
        \begin{align*}
            \q (\tilde{X}_{t}^{i} = x_{t}^{i} \given \x_{t+1}) = \frac{1}{Z} \cdot \q (X_{t}^{i} = x_{t}^{i} \given \x_{t+1}), \text{~where~} Z = \sum_{x_{t}^{i} \neq \text{\texttt{<MASK>}}} \q (X_{t}^{i} = x_{t}^{i} \given \x_{t+1}).
        \end{align*}
\end{proof}

\section{Relation Between $\calL (\V; \p_{\mathrm{tar}}, \p_{\mathrm{est}})$ and Matrix Scaling}
\label{appx:relation}

The matrix scaling problem gives a matrix $A$ as input and asks for diagonal `scaling' matrices $X$ and $Y$ such that $XAY$ is doubly stochastic (its row and column sums are all one). More generally, target row and column sum vectors $r$ and $c$ are provided and need not contain only ones. The solvability of this problem for positive matrices was established by \cite{sinkhorn1964relationship}, and its algorithms (sometimes called iterative proportional fitting), generalizations, and numerous applications have been studied thoroughly \citep{kalantari1993rate,ruschendorf1995convergence,allen2017much}; see \citep{idel2016review} for a review. Taking the multidimensional generalization of the problem and interpreting the tensor as a (unnormalized) probability distribution yields the connection to our problem, with the target sums being the univariate marginal distributions.

\section{Parameterizing Discrete Copulas by Odds Ratios}
\label{appx:discrete-copula}


We start by formally defining odds ratios.

\begin{defn}[\cite{rudas2018lectures}]\label{def:conditional_odds_ratios}
Let $p$ be a distribution over variables $\bm{X}$ each taking values in $\{0,1\}$. For a partition of $\bm{X}$ into sets $\bm{A}$ and $\bm{B}$, the \emph{conditional odds ratio} of variables $\bm{A}$ conditioned on the assignment $\bm{B}=\bm{b}$ is 
\[
\text{COR}_p(\bm{A}|\bm{B}=\bm{b})=\frac{\prod_{\bm{a}\in s} p(\bm a, \bm b)}{\prod_{\bm{a}\in d}p(\bm a, \bm b)}
\]
where $s$ is the set of assignments to $\bm{A}$ whose parity is the same as the number of variables in $\bm{A}$, and $d$ is the set of assignments whose parity is different.
\end{defn}

In the case of more than two categories per variable, $\text{COR}_p(\bm{A}|\bm{B}=\bm{b})$ can generalized further to be a set of similarly defined ratios (see, e.g., \citet{rudas2018lectures}). Together the set of all conditional odds ratios $\text{COR}_p(\bm{A}|\bm{B}=\bm{b})$ for partitions of $\bm{X}$ into sets $\bm{A}$ and $\bm{B}$ with $|\bm{A}|\ge 2$, completely specifies the association among the variables in the joint distribution $p$, as established by the following theorem.

\begin{thm}[\cite{rudas2018lectures}]\label{thm:discrete_combine_copula}
Let $q$ and $r$ be positive probability distributions on a the set of variables $\bm{X}$ each taking values in $\{0,1,\ldots,k\}$. Then there exists a unique probability distribution $p$ such that $p$ has the same univariate marginal distributions as $q$, that is, for all $i$
\[
p(x_i)=q(x_i),
\]
and $p$ has the same copula as $q$, that is for all partitions of $\bm{X}$ into sets $\bm{A}$ and $\bm{B}$ with $|\bm{A}|\ge 2$,
\[
\text{COR}_p(\bm{A}|\bm{B}=\bm{b})=\text{COR}_r(\bm{A}|\bm{B}=\bm{b}).
\]
\end{thm}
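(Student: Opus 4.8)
The plan is to split the statement into an existence part and a uniqueness part, reading it (consistently with the displayed equation $\text{COR}_p=\text{COR}_r$) as the claim that the univariate marginals of $q$ can be combined with the copula of $r$ to yield a unique joint distribution $p$ with $p(x_i)=q(x_i)$ for all $i$ and $\text{COR}_p=\text{COR}_r$ for every admissible partition. Existence is essentially free from the machinery of \cref{sec:copula-general}, so the substance is uniqueness.

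For \textbf{existence} I would exhibit $p$ as the I-projection of $r$ onto $\calP_{\mathrm{mar}}^{q}$, the set of distributions sharing the univariate marginals of $q$. Since $q$ and $r$ are positive and $\calP_{\mathrm{mar}}^{q}$ is nonempty (it contains $q$), \cref{prop:phat-form} guarantees this projection exists and has the product form $p(\x)=r(\x)\cdot\prod_i\exp(\V[i,x_i])$ for some matrix $\V$. By construction $p\in\calP_{\mathrm{mar}}^{q}$, so it matches the marginals of $q$; and by \cref{prop:copula_is_invariant}, multiplying $r$ by the per-variable factors $\exp(\V[i,x_i])$ leaves every conditional odds ratio unchanged, so $\text{COR}_p=\text{COR}_r$. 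This is the desired distribution.

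For \textbf{uniqueness}, suppose $p_1$ and $p_2$ both realize the marginals of $q$ and the copula of $r$. The key structural fact is that two positive distributions share a copula if and only if they differ by per-variable factors. To see this, write the log-linear expansion $\log p(\x)=\sum_{A}\lambda_A(\x_A)$ with baseline identifiability constraints; Möbius inversion over the subset lattice expresses each $\log\text{COR}(\bm A\mid\bm B=\bm b)$ as the top-order interaction term $\lambda_{\bm A}$, so equality of all conditional odds ratios with $|\bm A|\ge 2$ is equivalent to equality of all interaction terms of order $\ge 2$. Applying this to the pairs $(p_1,r)$ and $(p_2,r)$ shows $p_j(\x)\propto r(\x)\prod_i\exp(\V_j[i,x_i])$, i.e. both $p_1$ and $p_2$ lie in the exponential family generated by $r$ and the per-variable indicator statistics. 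But they also lie in the linear family $\calP_{\mathrm{mar}}^{q}$, and a linear family meets such an exponential family in at most one point — equivalently, the I-projection of $r$ onto $\calP_{\mathrm{mar}}^{q}$ is unique, which is the uniqueness already underlying \cref{prop:iproj-is-good} and \cref{prop:phat-form}. Hence $p_1=p_2$, and this common distribution is the $p$ constructed above.

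The hard part is the first step of the uniqueness argument: rigorously establishing that the conditional odds ratios of order $\ge 2$ capture \emph{exactly} the higher-order log-linear interactions, with no redundancy and no gaps. This is the heart of the mixed-parameterization theory of \citet{rudas2018lectures}: one must verify, by a dimension count matching the $Nk$ marginal parameters and $(k{+}1)^N-1-Nk$ association parameters against the full simplex dimension $(k{+}1)^N-1$, together with a smoothness and variation-independence argument (via the inverse function theorem), that the map $p\mapsto(\text{marginals},\,\text{copula})$ is a bijection. The multi-category generalization of \cref{def:conditional_odds_ratios}, in which each $\text{COR}$ becomes a family of ratios, is what makes this bookkeeping delicate; the binary case is the clean prototype on which the general argument is modeled.
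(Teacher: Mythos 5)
Your proposal is correct in outline but takes a genuinely different route from the paper. The paper does not prove this theorem from first principles at all: its entire proof is a one-line citation of the mixed-parameterization theorem \citep[Theorem 10.2]{rudas2018lectures}, instantiated by taking the descending class of subsets to be the empty set together with all singletons (yielding the univariate marginal parameters) and the ascending class to be its complement, all subsets of size at least two (yielding the conditional odds ratio, i.e.\ copula, parameters); variation independence of the two parameter blocks then delivers existence and uniqueness simultaneously. You instead assemble the result from the paper's own machinery: existence as the I-projection of $r$ onto $\calP_{\mathrm{mar}}^{q}$ via \cref{prop:phat-form}, with copula preservation from \cref{prop:copula_is_invariant}, and uniqueness from the standard fact that an exponential family through $r$ generated by per-variable statistics meets the linear family $\calP_{\mathrm{mar}}^{q}$ in at most one point (the Pythagorean identity argument underlying \cref{prop:iproj-is-good}). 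This is a worthwhile trade: your route is essentially self-contained given the paper's Section 4 results and makes explicit the connection, stated only informally in the appendix via \citet[Theorem 6.2]{geenens2020copula}, that the combined distribution of \cref{thm:discrete_combine_copula} \emph{is} the I-projection computed by \cref{thm:copula-obj}; the paper's route is shorter but imports the full mixed-parameterization theory as a black box.

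One caveat on precision rather than correctness: your claim that M\"obius inversion ``expresses each $\log\text{COR}(\bm{A}\mid\bm{B}=\bm{b})$ as the top-order interaction term $\lambda_{\bm{A}}$'' is not literally true. Differencing the log-linear expansion across the coordinates of $\bm{A}$ annihilates only the terms $\lambda_{\bm{C}}$ with $\bm{C}\not\supseteq\bm{A}$, so $\log\text{COR}(\bm{A}\mid\bm{B}=\bm{b})$ involves all $\lambda_{\bm{C}}$ with $\bm{C}\supseteq\bm{A}$, evaluated at $\bm{b}$; recovering the interactions of order $\ge 2$ from the CORs is a triangular system solved by downward induction on $|\bm{A}|$, not a term-by-term identity. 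You correctly flag that the multi-category bookkeeping (where \cref{def:conditional_odds_ratios} becomes a family of spanning-cell ratios) is exactly where the content of \citet{rudas2018lectures} is hidden, so your uniqueness step is still ultimately leaning on the same source the paper cites, just on a smaller and more transparent piece of it.
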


\begin{proof}
This follows from \citep[Theorem 10.2]{rudas2018lectures} by taking the descending set to contain the empty set and all singletons (and the ascending set, its complement).
\end{proof}

Theorem~\ref{thm:discrete_combine_copula} shows how any distribution $p$ can be viewed as combining independent marginal distributions (i.e., from $r$) and odds ratios (i.e., from $q$).
Such a combination has desirable properties. For example, in the case of two variables with possibly many categories, it has been shown that among all distributions with the same margins as $r$, the distribution $p$ minimizes the KL-divergence to $q$ \citep[Theorem 6.2]{geenens2020copula}, i.e. that $p$ is the information projection of $q$ onto the set of distributions with the margins of $r$.

\section{Unbiased Univariate Marginals from Discrete Diffusion Models}
\label{appx:dm-unbiased}

In this section, we show that when their respective training losses are minimized, discrete-time and continuous-time discrete diffusion models recover the true univariate marginals.

\boldparagraph{Discrete-Time Diffusion Models.}
Discrete-time diffusion models \citep{austin2021structured} are trained to maximize the ELBO between the forward joint distribution $\p(\x_{0}) \q(\x_{1:T} \given \x_{0})$, where $\p(\x_{0})$ is the data distribution, and the reverse joint distribution $\p_{\theta} (\x_{0:T})$. The ELBO can be simplified to 
    \begin{align*}
        \expectation_{\q} \left [ \log \frac{\p(\x_{T})}{\q(\x_{T})} + \sum_{t=1}^{T} \log \frac{\p_{\theta}(\x_{t-1} \given \x_{t})}{\q(\x_{t-1} \given \x_{t})} + \log \p(\x_{0}) \right ].
    \end{align*}
Assume that $\p_{\theta} (\x_{t-1} \given \x_{t})$ encodes fully-factorized distribution, the above objective can be simplified as
    \begin{align*}
        \sum_{t=1}^{T} \sum_{i} \q (x_{t-1}^{i} \given \x_{t}) \log \frac{\p_{\theta} (x_{t-1}^{i} \given \x_{t})}{\q (x_{t-1}^{i} \given \x_{t})} + 
        \expectation_{\q} \left [ \log \frac{\p(\x_{T})}{\q(\x_{T})} + \log \p(\x_{0}) \right ],
    \end{align*}
\noindent where the second term is independent to $\p_{\theta}$. From the first term of the above formula, we can conclude that the ELBO objective is maximized when $\p_{\theta} (x_{t-1}^{i} \given \x_{t}) \!=\! \q (x_{t-1}^{i} \given \x_{t})$ for every $t$ and every $i$.

\boldparagraph{Continuous-Time Diffusion Models.} As described in \cref{sec:preliminaries}, many continuous-time diffusion models learn to approximate the likelihood ratio (defined as $s_{\theta} (\x_{t}, \x'_{t}; t)$) at all noise levels $t \!\in\! [0, T]$:
    \begin{align*}
        s_{\theta} (\x_{t}, \x'_{t}; t) := \frac{\q(\X_{t} = \x'_{t})}{\q(\X_{t} = \x_{t})}.
    \end{align*}
Specifically, \citet{lou2024discrete,meng2022concrete} directly parameterize a neural network to approximate the likelihood ratios, and \citet{sun2022score} approximates the likelihood ratios with the conditional distributions $\p_{\theta} (X_{t}^{i} \given \x_{t}^{\backslash i})$ ($\forall i, t$).

For each $\x_{t}$, since there are exponentially many possible $\x'_{t}$, it is infeasible to have a neural network to directly model the likelihood ratio for all pairs of $(\x_{t}, \x'_{t})$. Instead, they focus on $(\x_{t}, \x'_{t})$ pairs where $\x_{t}$ and $\x'_{t}$ are only different in one single variable, \ie their Hamming distance is one. For example, in \citet{lou2024discrete}, they represent $s_{\theta}$ as $s_{\theta} (\x_{t}, y^{i}_{t}; t, i)$, which computes the likelihood ratio between $\x_{t}$ and $\x'_{t} \!=\! \{\x_{t}^{\backslash i}, y^{i}_{t}\}$. $s_{\theta}$ is trained by minimizing the following objective:
    \begin{align*}
        \expectation_{t, \x_{t} \sim \q(\X_{t})} \left [ \sum_{i} \sum_{y_{t}^{i} \neq x_{t}^{i}} w_{t} \left ( s_{\theta} (\x_{t}, y_{t}^{i}; t, i) - \frac{\q(\X_{t} = \{\x_{t}^{\backslash i}, y_{t}^{i}\})}{\q (\X_{t} = \x_{t})} \log s_{\theta} (\x_{t}, y_{t}^{i}; t, i) \right ) \right ],
    \end{align*}
\noindent where $\{w_{t}\}_{t}$ are positive weights. When the above objective is minimized, $s_{\theta}$ recovers the correct likelihood ratios:
    \begin{align}
        \forall i, t, \; s_{\theta} (\x_{t}, y_{t}^{i}; t, i) = \frac{\q(\X_{t} = \{\x_{t}^{\backslash i}, y_{t}^{i}\})}{\q (\X_{t} = \x_{t})}. \label{eq:appx-c-1}
    \end{align}
At inference time, continuous-time discrete diffusion models select a list of time steps $0 \!<\! t_{0} \!<\! \cdots \!<\! t_{k} \!=\! T$ to sample from: first sample from the prior $\p (\X_{t_{k}})$ and then sample recursively from $\{\p_{\theta} (\x_{t_{i-1}} \given \x_{t_{i}})\}_{i=1}^{k}$, where $\p_{\theta} (\x_{t_{i-1}} \given \x_{t_{i}})$ is obtained from $s_{\theta} (\x_{t}, y_{t}^{i}; t, i)$ in an indirect manner. Specifically, assume $\frac{d \p(\x_{t})}{d t} \!=\! Q \!\cdot\! \p(\x_{t})$, we have\footnote{This argument largely follows Theorem 4.1 in \citet{lou2024discrete}. We include it for the sake of completeness.}
    \begin{align*}
        \q (\x_{t_{i-1}} \given \x_{t_{i}}) & = \q(\x_{t_{i}} \given \x_{t_{i-1}}) \cdot \frac{\q (\x_{t_{i-1}})}{\q (\x_{t_{i}})}, \\
        & = \q(\x_{t_{i}} \given \x_{t_{i-1}}) \cdot \left ( \sum_{\x} \exp (- \Delta t \!\cdot\! Q) (\x_{t_{i-1}}, \x) \cdot \frac{\q (\X_{t_{i}} = \x)}{\q (\X_{t_{i}} = \x_{t_{i}})} \right ),
    \end{align*}
\noindent where $\Delta t \!:=\! t_{i} \!-\! t_{i-1}$ and $\exp (- \Delta t \!\cdot\! Q) (\x_{t_{i-1}}, \x)$ denotes the product of $\exp (- \Delta t \!\cdot\! Q) (x_{t_{i-1}}^{j}, x^{j})$, the $x_{t_{i-1}}^{j}$-th row and $x^{j}$-th column of $\exp (- \Delta t \!\cdot\! Q)$.

Plug in \cref{eq:appx-c-1}, we can compute the marginal of $x_{t_{i-1}}^{j}$ (\ie $\p_{\theta} (x_{t_{i-1}}^{j} \given \x_{t_{i}})$) following
    \begin{align*}
        \q (X_{t_{i-1}}^{j} = y \given \x_{t_{i}}) & \propto \q(\x_{t_{i}} \given \x_{t_{i-1}}) \cdot \left ( \sum_{y'} \exp (- \Delta t \!\cdot\! Q) (y, y') \cdot s_{\theta} (\x_{t_{i}}, y'; t_{i}, j) \right ), \\
        & = \exp (\Delta t \!\cdot\! Q) (y, x_{t_{i}}^{j}) \cdot \left ( \sum_{y'} \exp (- \Delta t \!\cdot\! Q) (y, y') \cdot s_{\theta} (\x_{t_{i}}, y'; t_{i}, j) \right ).
    \end{align*}
Therefore, if $s_{\theta}$ perfectly learns the likelihood ratios between inputs with Hamming distance at most one, then the correct marginals $\q (x_{t_{i-1}}^{j} \given \x_{t_{i}})$ can be computed using $s_{\theta}$.

\section{Implementation Details of DCD}
\label{appx:ar-iproj}

\begin{figure}[t]
    \centering
    \includegraphics[width=0.6\columnwidth]{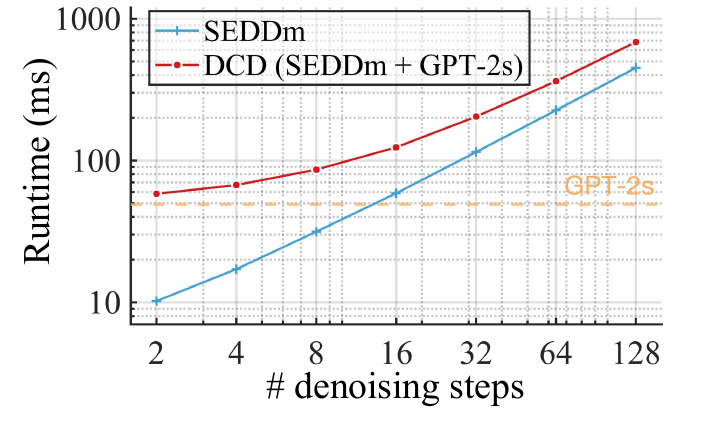}
    \vspace{-1.7em}
    \caption{Sampling time of DCD and its two base models with 2 to 128 denoising steps.}
    \label{fig:runtime}
\end{figure}

We describe details about the ``autoregressive'' version of DCD introduced in \cref{sec:overall-sampling}. According to \cref{sec:overall-sampling}, the first $(T \!-\! t \!-\! 1) / T$ portion of the tokens in $\x_{t+1}$ are unmasked. At step $t$, we only need to additionally unmask the tokens spanning the $(T \!-\! t \!-\! 1) / T$ to $(T \!-\! t) / T$ fraction of the sequence $\x_{t}$. We do this by caching the keys and values generated by the attention layers of tokens generated in previous denoising steps. So at step $t$, we will have the KV-caches of the first $(T \!-\! t \!-\! 1) / T$ fraction of tokens. As a result, the computational cost for running the autoregressive Transformer is independent of the number of denoising steps.

\begin{figure}[H]
\vspace{-0.8em}
\begin{algorithm}[H]
\caption{DCD with Autoregressive Copula Models and Using Autoregressive Sampling}
\label{alg:sampling-gpt-ar}
{\fontsize{9}{9} \selectfont
\begin{algorithmic}[1]

\STATE {\bfseries Inputs:} a diffusion model $\p_{\mathrm{dm}}$, an autoregressive model $\p_{\mathrm{ar}}$, number of time steps $T$, sequence length $L$

\vspace{0.2em}

\STATE {\bfseries Outputs:} a sample $\x_0$ from the discrete diffusion model augmented by the autoregressive model

\vspace{0.2em}

\STATE {\bfseries Initialize:} Sample $\x_{T}$ from the prior noise distribution $\p (\X_{T})$

\vspace{0.2em}

\STATE {\bfseries for} \tikzmarknode{a1}{} $t = T\!-\!1$ {\bfseries to} $0$

\vspace{0.1em}

\STATE \hspace{0.75em} $i_{\mathrm{min}}, i_{\mathrm{max}} \!=\! \frac{L}{T} \!\cdot\! (T \!-\! t \!-\! 1), \frac{L}{T} \!\cdot\! (T \!-\! t)$ (w.l.o.g. assume $L$ is divisible by $T$)

\STATE \hspace{0.75em} Compute $\{\p_{\mathrm{dm}}(\tilde{X}_{t}^{i} \given \x_{t+1})\}_{i}$ and $\{\p_{\mathrm{dm}}(\tilde{X}_{t}^{i} \given \x_{t+1}^{<i})\}_{i}$ for each $i \in [i_{\mathrm{min}}, i_{\mathrm{max}})$ using the diffusion model

\vspace{0.2em}

\STATE \hspace{0.75em} Compute $\V[i,\tilde{x}_{t}^{i}] \!=\! \log \p_{\mathrm{dm}} (\tilde{x}_{t}^{i} \given \x_{t+1}) - \log \p_{\mathrm{dm}} (\tilde{x}_{t}^{i} \given \x_{t+1}^{<i})$ ($\forall i \in [i_{\mathrm{min}}, i_{\mathrm{max}}), \tilde{x}_{t}^{i}$)

\vspace{0.2em}

\STATE \hspace{0.75em} $\x_{t} \leftarrow \x_{t+1}$

\vspace{0.1em}

\STATE \hspace{0.75em} {\bfseries for} \tikzmarknode{a2}{} $i = i_{\mathrm{min}}$ {\bfseries to} $i_{\mathrm{max}} - 1$

\vspace{0.2em}

\STATE \hspace{1.5em} Sample $x_{t}^{i}$ from $\hat{\p} (x_{t}^{i}) \propto \p_{\mathrm{ar}} (x_{t}^{i} \given \x_{t}^{<i}) \cdot \prod_{i} \exp ( \V [i, x_{t}^{i}] )$ and store it to $\x_{t}$

\end{algorithmic}
}    
\end{algorithm}
\begin{tikzpicture}[overlay,remember picture]
    \draw[black,line width=0.6pt] ([xshift=-12pt,yshift=-4pt]a1.west) -- ([xshift=-12pt,yshift=-72pt]a1.west) -- ([xshift=-8pt,yshift=-72pt]a1.west);
\end{tikzpicture}
\begin{tikzpicture}[overlay,remember picture]
    \draw[black,line width=0.6pt] ([xshift=-12pt,yshift=-4pt]a2.west) -- ([xshift=-12pt,yshift=-12pt]a2.west) -- ([xshift=-8pt,yshift=-12pt]a2.west);
\end{tikzpicture}
\vspace{-3.2em}
\end{figure}

\boldparagraph{Additional Runtime Analysis.}
\cref{fig:runtime} displays the generation time per sample for $\text{SEDD}_{\text{\texttt{M}}}$, $\text{GPT-2}_{\text{\texttt{S}}}$, and DCD. When the number of denoising steps is small, the computation cost of running $\text{GPT-2}_{\text{\texttt{S}}}$ dominates the total runtime of DCD. However, as the number of denoising steps increases, this cost is amortized because, with KV-caching, the total computation cost for running $\text{GPT-2}_{\text{\texttt{S}}}$ stays constant.

\section{Additional Unconditional Generation Experiments}
\label{appx:additional-uncond-exps}

\begin{figure}[t]
    \centering
    \includegraphics[width=\linewidth]{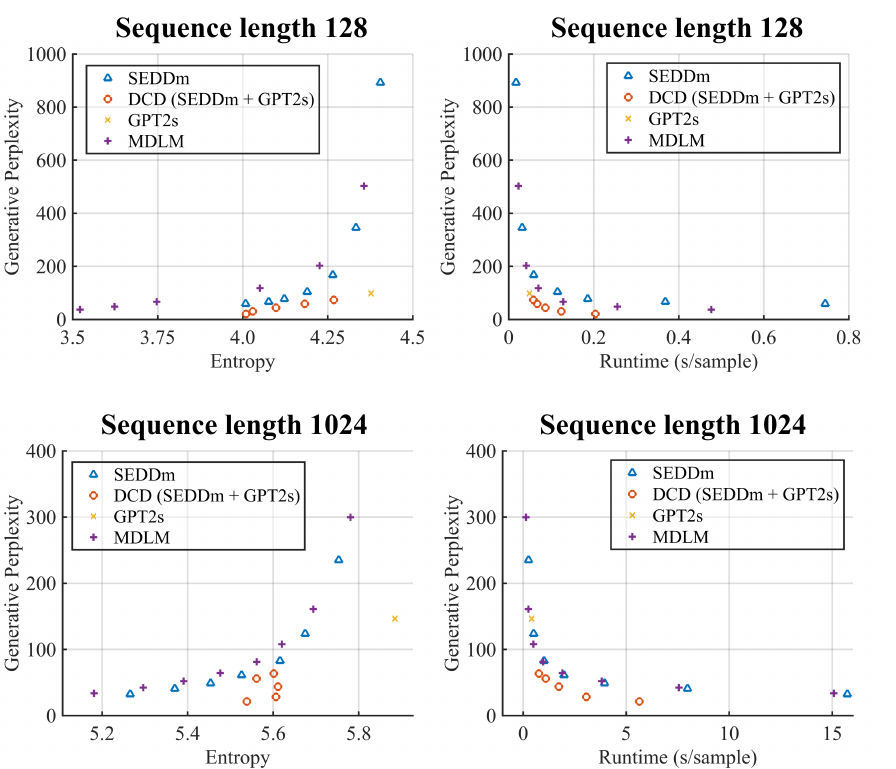}
    \vspace{-1.8em}
    \caption{Comparison between generative perplexity ($\downarrow$), diversity (measured by sentence entropy; $\uparrow$), and runtime ($\downarrow$) of DCD with baselines.}
    \label{fig:more-uncond-results}
\end{figure}

To better understand the relation between quality (measured by generative perplexity), diversity (measured by sentence entropy\footnote{The sentence entropy of a sequence is the entropy of its token frequency distribution. The reported number is averaged across all samples.}), and speed for DCD and its baselines. Specifically, we run the more efficient version of DCD described in the last paragraph of \cref{sec:overall-sampling} and \cref{appx:ar-iproj} to generate text sequences of lengths 128 and 1024. In addition to SEDD and GPT2, the two base models used by DCD, we compare them with MDLM \citep{sahoo2024simple}, a more recent discrete diffusion model that is more efficient than SEDD. Note that DCD can use any discrete diffusion model as its base model.

First, we compare the sample time and the generative perplexity (the second and the fourth sub-plot in \cref{fig:more-uncond-results}). Compared to SEDD, GPT, and MDLM, DCD consistently achieves better generative perplexity given a fixed runtime constraint. It also requires less time to achieve a desired perplexity value.

Additionally, we compare the perplexity and diversity of the generated text sequences. Following community standards, we adopt the sentence entropy to measure the diversity of generated text. Specifically, the entropy of each text sequence is the entropy of its token frequency distribution, and the final sentence entropy is the average entropy over all generated sequences. The desired behavior is to have low generative perplexity and high sentence entropy (which means high diversity). Results are shown in the table below and \cref{fig:more-uncond-results}'s first and third sub-plot. Compared to the two discrete diffusion models (SEDD and MDLM), DCD achieves better generative perplexity under the same entropy, which offers a better perplexity-diversity tradeoff. Compared to the autoregressive GPT model, although the entropy of DCD is lower, it achieves better generative perplexity with slightly worse entropy.

\section{Additional Experimental Details}
\label{appx:exp-details}

This section provides additional details of the experiments.

\subsection{Unconditional Text Generation}
\label{appx:exp-details-uncond-text}

\boldparagraph{SEDD.}
We adopt the SEDD-medium model with 320M non-embedding parameters trained on OpenWebText. The model is accessed through HuggingFace: \url{https://huggingface.co/louaaron/sedd-medium}. We follow the original paper \citep{lou2024discrete} and use the log-linear noise schedule $\sigma(t) \!=\! - \log (1 \!-\! (1 \!-\! \epsilon t))$, which leads to the forward transition probabilities ($0 \!\leq\! s \!<\! t \!\leq\! T$):
    \begin{align*}
        \q (\x_{t} \given \x_{s}) := \cat (\x_{t}; \exp (\sigma(t - s) \cdot Q) \cdot \x_{s}).
    \end{align*}
The absorbing mask forward noising process is used. The corresponding transition rate matrix is
    \begin{align*}
        Q := \begin{bmatrix}
            -1 & 0 & \cdots & 0 & 0 \\
            0 & -1 & \cdots & 0 & 0 \\
            \vdots & \vdots & \ddots & \vdots & \vdots \\
            0 & 0 & \cdots & -1 & 0 \\
            1 & 1 & \cdots & 1 & 0 \\
        \end{bmatrix},
    \end{align*}
\noindent where the last category is \texttt{<MASK>}.

\boldparagraph{GPT.}
The GPT-2-small model is obtained from HuggingFace: \url{https://huggingface.co/openai-community/gpt2}.

\boldparagraph{DCD.}
We implement DCD by combining $\text{SEDD}_{\text{\texttt{M}}}$ and $\text{GPT-2}_{\text{\texttt{S}}}$ following the steps in \cref{alg:sampling}. In line 8, instead of masking tokens independently, we group chunks of 8 tokens together and mask/unmask them with the same probability given the noise schedule (\ie $\alpha_{t} / \alpha_{t+1}$ as shown in Prop.~\ref{prop:mask-can-be-separated}).

\subsection{Conditional Text Generation}
\label{appx:exp-details-cond-text}

\boldparagraph{MAUVE Score.}
We adopt the MAUVE implementation available in the Python package \texttt{evaluate}. We use the default hyperparameters established by the original paper \citep{pillutla2021mauve}, which is also the default used by the package. We found that the number of samples and the number of samples given a fixed prompt influenced the score. Therefore, we randomly selected the 2,000 prompts and generated 5 samples for each prompt for all methods.

\boldparagraph{Detailed Runtime Analysis.}
As shown in \cref{alg:sampling}, in each denoising step of DCD, we need to run the discrete diffusion model twice: first to compute $\{\p (\tilde{X}_{t}^{i} \given \x_{t+1})\}_{i}$ and next to compute $\{\p (\tilde{X}_{t}^{i} \given \x_{t+1}^{<i})\}_{i}$ by applying causal attention masks to the same denoising neural network given that it is based on the Transformer architecture. Next, as discussed in \cref{appx:ar-iproj}, the total runtime consumed by the autoregressive model remains constant across different numbers of denoising steps thanks to the KV-caching mechanism. Therefore, the runtime of DCD will be dominated by the computation cost of the autoregressive model with only a few denoising steps. As the number of denoising steps increases, the runtime of the autoregressive model will be amortized and the total computation cost will be dominated by the cost to evaluate the diffusion model.

\boldparagraph{SSD-LM.}
SSD-LM \citep{han2023ssd} is a semi-autoregressive model that uses techniques from discrete diffusion models to predict/denoise chunks of sequences in an autoregressive manner. Specifically, given a predefined chunk size, SSD-LM diffuses tokens in each chunk one by one conditioned on all previous chunks. As a result, the model is semi-autoregressive and cannot see suffix prompts.

While the official implementation on GitHub (\url{https://github.com/xhan77/ssd-lm}) only allows conditioning on tokens in previous prompts, we improved their code to also allow conditioning on tokens in the current chunk that is being diffused. Specifically, we replace the diffusion model's input corresponding to the prompt tokens with the ground truth token embeddings.

We followed the original paper to choose a chunk size of 32 and use top-p sampling with $p \!=\! 0.95$. The remaining hyperparameters are kept as default.

\subsection{Antibody Sequence Infilling}
\label{appx:exp-details-protein-infill}

\boldparagraph{Detailed Task Description.}
The adopted antibodies with an immunoglobulin G (IgG) format, which comprises a heavy (H) chain and a light (L) chain. Each chain has three complementarity determining regions (CDRs) that are crucial toward the binding affinity to the target antigen.

\boldparagraph{Training NOS-D.}
We use the training script as well as the dataset provided in the official GitHub repo of NOS-D (\url{https://github.com/ngruver/NOS}). The model is trained with 50 epochs using the default settings (\eg learning rate and its schedule).

\boldparagraph{Training GPT.}
We use the same dataset provided in the repository of NOS-D and use the GPT implementation from \url{https://github.com/karpathy/nanoGPT/tree/master}. The GPT model has 6 layers, an embedding size of 512, and 16 attention heads. The model is trained for 10 epochs with the default settings in the nanoGPT repository.

\boldparagraph{DCD.}
When implementing DCD for the antibody sequence infilling task, we add an additional scaling factor to the coefficients in $\V$. That is, $\V$ is updated in line 6 of \cref{alg:sampling} following
    \begin{align*}
        \forall i, \tilde{x}_{t}^{i}, \, \V [i, \tilde{x}_{t}^{i}] = \beta \cdot \left ( \log \p_{\mathrm{dm}} (\tilde{x}_{t}^{i} \given \x_{t+1}) - \log \p_{\mathrm{dm}} (\tilde{x}_{t}^{i} \given \x_{t+1}^{<i}) \right ),
    \end{align*}
\noindent where we set $\beta \!=\! 0.1$ for this task. We note that $\beta \!=\! 1$ works well for the language modeling tasks. The need to choose a smaller $\beta$ in this task may be caused by the fact that the dataset and the models are much smaller and are more prone to overfitting.

\section{Additional Related Work}
\label{appx:additional-related-work}

We briefly review a class of related works that perform (semi-)autoregressive diffusion, which is weakly related to our work since we also ``combine'' discrete diffusion models with autoregressive models. Specifically, \citet{wu2023ar,chen2024diffusion} perform diffusion in an autoregressive manner by allowing the noising schedule to be variable-dependent. Variables at the beginning are kept unchanged at small $t$s and are corrupted only when $t$ is close to $T$; variables at the end are corrupted in the first few time steps. During sampling where $t$ moves from $T$ to $0$, initial variables/tokens are first denoised, followed by later tokens. This allows the diffusion model to perform ``autoregressive denoising''.

\section{Additional Text Samples}
\label{appx:additional-text-samples}

We provide randomly selected unconditional samples in \cref{fig:samples-dcd4-uncond,fig:samples-dcd32-uncond} and conditional samples in \cref{fig:samples-dcd4-cond,fig:samples-dcd32-cond}.

\begin{figure}[t]
    \centering
    \includegraphics[width=\columnwidth]{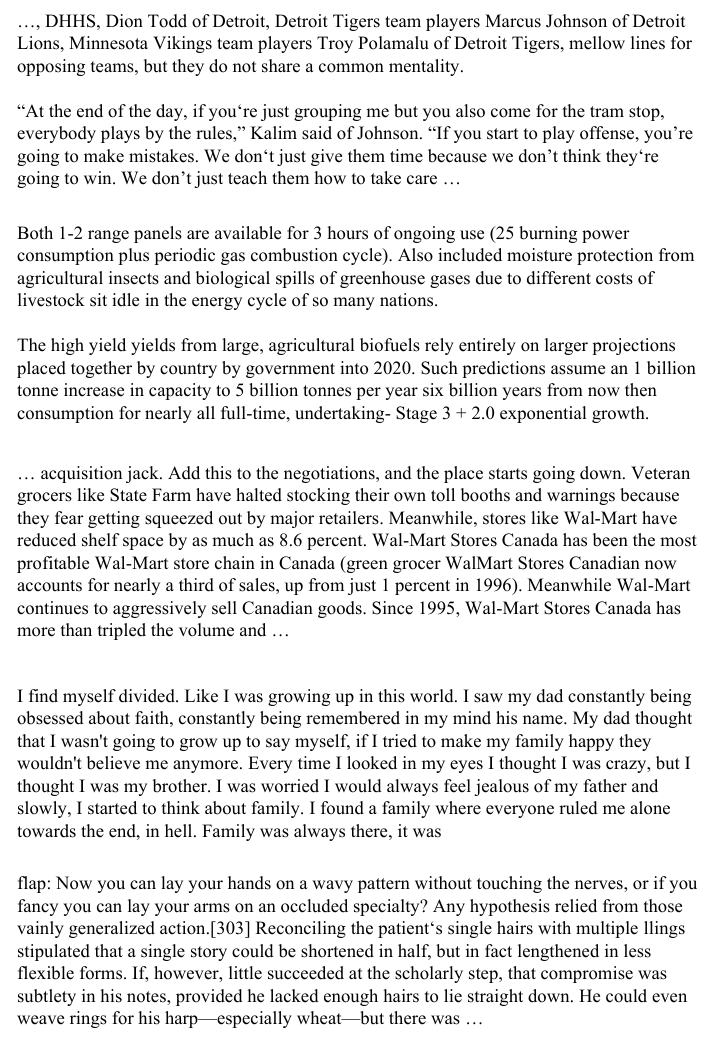}
    \caption{Randomly selected unconditional samples from DCD ($\text{SEDD}_{\text{\texttt{M}}} + \text{GPT-2}_{\text{\texttt{S}}}$) with 4 denoising steps.}
    \label{fig:samples-dcd4-uncond}
\end{figure}

\begin{figure}[t]
    \centering
    \includegraphics[width=\columnwidth]{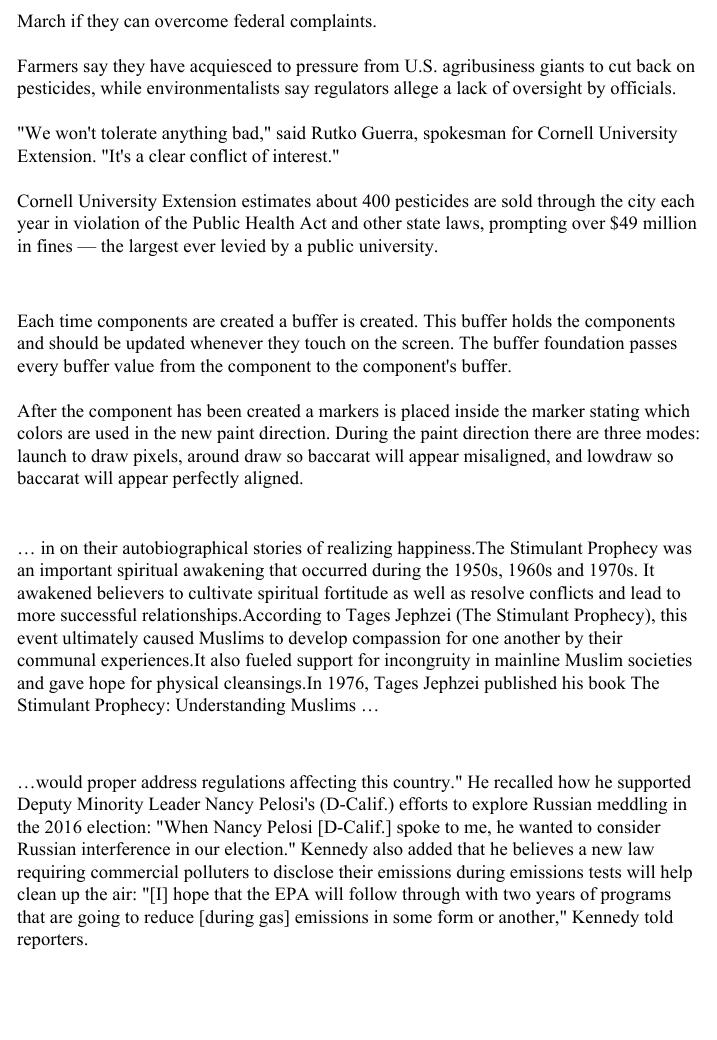}
    \caption{Randomly selected unconditional samples from DCD ($\text{SEDD}_{\text{\texttt{M}}} + \text{GPT-2}_{\text{\texttt{S}}}$) with 32 denoising steps.}
    \label{fig:samples-dcd32-uncond}
\end{figure}

\begin{figure}[t]
    \centering
    \includegraphics[width=\columnwidth]{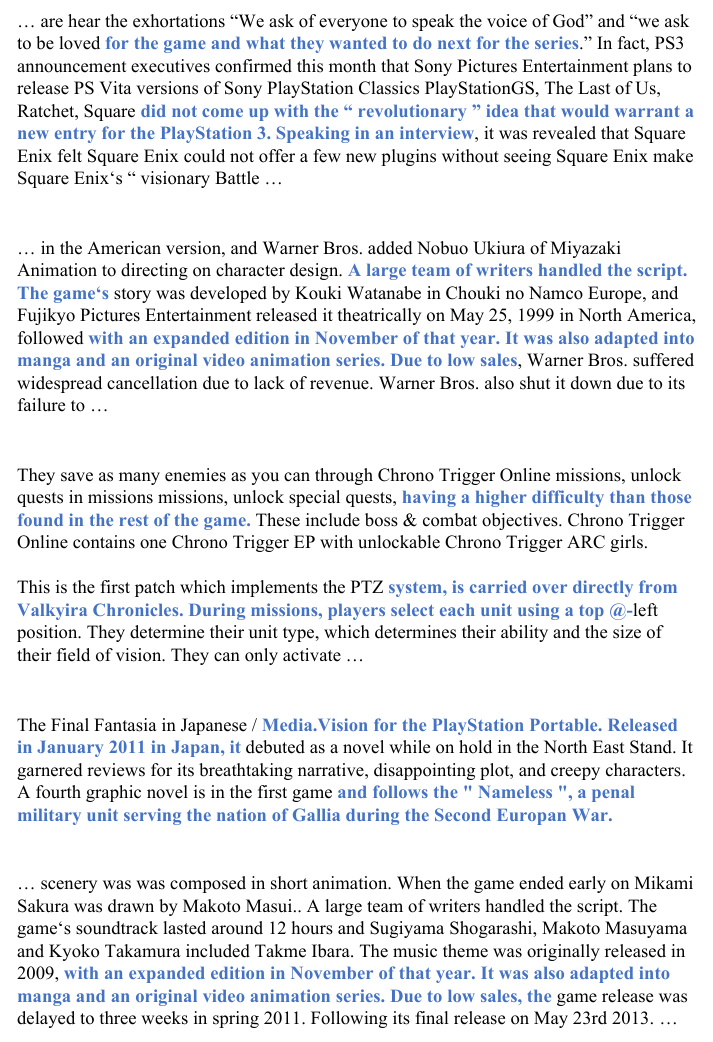}
    \caption{Randomly selected conditional samples from DCD ($\text{SEDD}_{\text{\texttt{M}}} + \text{GPT-2}_{\text{\texttt{S}}}$) with 4 denoising steps. Prompt texts are bolded and in blue.}
    \label{fig:samples-dcd4-cond}
\end{figure}

\begin{figure}[t]
    \centering
    \includegraphics[width=\columnwidth]{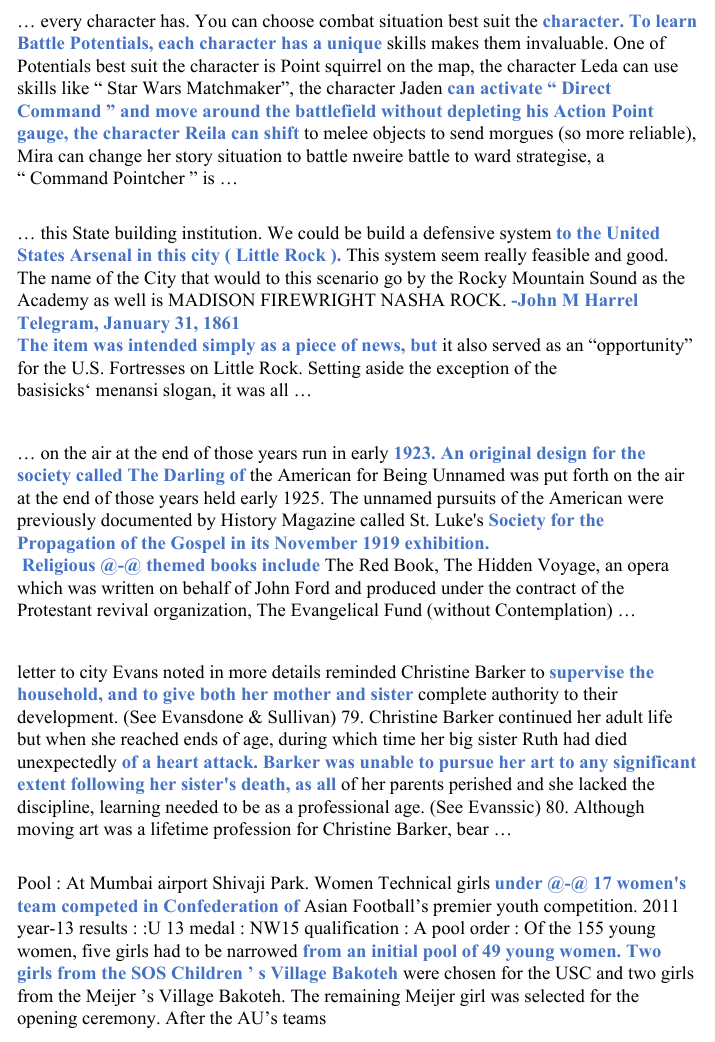}
    \caption{Randomly selected conditional samples from DCD ($\text{SEDD}_{\text{\texttt{M}}} + \text{GPT-2}_{\text{\texttt{S}}}$) with 32 denoising steps. Prompt texts are bolded and in blue.}
    \label{fig:samples-dcd32-cond}
\end{figure}

\end{document}